\documentclass[twoside]{article}

\usepackage[accepted]{aistats2025}
\usepackage{appendix}
\usepackage{color}
\usepackage{url}
\usepackage{graphicx}
\usepackage{multirow}
\usepackage{amssymb}
\usepackage{amsthm}
\usepackage[round]{natbib}

\usepackage{hyperref}

\newcommand{\beq}{\vspace{0mm}\begin{equation}}
\newcommand{\eeq}{\vspace{0mm}\end{equation}}
\newcommand{\beqs}{\vspace{0mm}\begin{eqnarray}}
\newcommand{\eeqs}{\vspace{0mm}\end{eqnarray}}
\newcommand{\barr}{\begin{array}}
\newcommand{\earr}{\end{array}}

\newcommand{\av}[0]{{\boldsymbol{a}}}

\newcommand{\cv}[0]{{\boldsymbol{c}}}

\newcommand{\ev}[0]{{\boldsymbol{e}}}

\newcommand{\xv}{\boldsymbol{x}}

\newcommand{\zv}{\boldsymbol{z}}

\newtheorem{thm}{Theorem} [section]

\newtheorem{lem}[thm]{Lemma}
\newtheorem{prop}[thm]{Proposition}
\newtheorem{assum}{Assumption} [section]

\usepackage{booktabs}
\usepackage{array}
\newcolumntype{L}[1]{>{\raggedright\let\newline\\\arraybackslash\hspace{0pt}}m{#1}}
\newcolumntype{C}[1]{>{\centering\let\newline\\\arraybackslash\hspace{0pt}}m{#1}}
\newcolumntype{R}[1]{>{\raggedleft\let\newline\\\arraybackslash\hspace{0pt}}m{#1}}


\begin{document}

\twocolumn[

\aistatstitle{Elastic Representation: Mitigating Spurious Correlations for Group Robustness}

\aistatsauthor{ Tao Wen \And Zihan Wang \And Quan Zhang \And Qi Lei}

\aistatsaddress{Dartmouth College \\ \href{mailto:tw2672@nyu.edu}{tw2672@nyu.edu} \And  New York University \\ \href{zw3508@nyu.edu}{zw3508@nyu.edu} \And Michigan State University \\ \href{quan.zhang@broad.msu.edu}{quan.zhang@broad.msu.edu} \And New York University \\ \href{ql518@nyu.edu}{ql518@nyu.edu} } ]

\begin{abstract}
Deep learning models can suffer from severe performance degradation when relying on spurious correlations between input features and labels, making the models perform well on training data but have poor prediction accuracy for minority groups. This problem arises especially when training data are limited or imbalanced. While most prior work focuses on learning invariant features (with consistent correlations to y), it overlooks the potential harm of spurious correlations between features. We hereby propose Elastic Representation (ElRep) to learn features by imposing Nuclear- and Frobenius-norm penalties on the representation from the last layer of a neural network. Similar to the elastic net, ElRep enjoys the benefits of learning important features without losing feature diversity. The proposed method is simple yet effective. It can be integrated into many deep learning approaches to mitigate spurious correlations and improve group robustness. Moreover, we theoretically show that ElRep has minimum negative impacts on in-distribution predictions. This is a remarkable advantage over approaches that prioritize minority groups at the cost of overall performance.
\end{abstract}

\section{INTRODUCTION}\label{sec:intro}
Group robustness is critical for deep learning models, particularly when they are deployed in real-world applications like medical imaging and disease diagnosis \citep{huang2022developing,kirichenko2022last}. 
In practice, data are often limited, and models are frequently exposed to domains or distributions that are not well represented in training data. Group robustness aims to enable models to generalize to unseen domains, addressing challenges such as differing image backgrounds or styles. Standard training procedures, like empirical risk minimization, can result in good performance on average but poor accuracy for certain groups, especially in the presence of spurious correlations \citep{sagawa2020investigation,haghtalab2022demand}.

Spurious correlations arise when models rely on features that correlate with class labels in the training data but are irrelevant to the true labeling function. This leads to performance degradation for out-of-distribution (OOD) generalization. For example, a model trained to classify objects, like waterbirds and landbirds, might rely on background or textures \citep{geirhos2018imagenet,xiao2020noise}, like water and land, which are irrelevant to the object, resulting in low accuracy for minority groups of waterbirds on land and landbirds on water. 
Ideally, a deep learning model should learn features that have invariant correlations with labels for all distributions.

While neural-network classification models trained by empirical risk minimization (ERM) may lead to poor group robustness and OOD generalization \citep{geirhos2020shortcut,zhang2022correct} and be no better than random guessing on minority groups when predictions exclusively depend on spurious features \citep{shah2020pitfalls},  recent studies have shown that even standard ERM can well learn both spurious and invariant (non-spurious) features \citep{kirichenko2022last,izmailov2022feature}; the low accuracy of ERM on minority groups results from the classifier, i.e., the linear output layer of a neural network, which tends to overweight spurious features.
Based on this finding, we propose Elastic Representation (ElRep) 
by imposing nuclear-norm and Frobenius-norm penalties on feature representations. This approach not only regularizes the learning of spurious features but also enhances the prominence of invariant features.

Our approach borrows the idea from the elastic net \citep{zou2005regularization} that imposes $\ell_1$ and $\ell_2$ penalties on regression coefficients. 
Though we regularize the feature representation rather than the weights of the classifier, the intuition is similar. Specifically, a nuclear norm regularizing the singular values of the representation matrix facilitates a sparse retrieval of the backbone features, and its effectiveness has been underpinned by \citet{shi2024domain}. 
However, we have observed that using a nuclear-norm penalty alone can suffer from a problem similar to that of lasso, as it tends to capture only part of the invariant features but omit others if they are highly correlated.
The over-regularization can undermine the robustness on minority groups or unseen data where only the omitted features are present. 

\begin{figure}[!t]
\includegraphics[width=\linewidth]{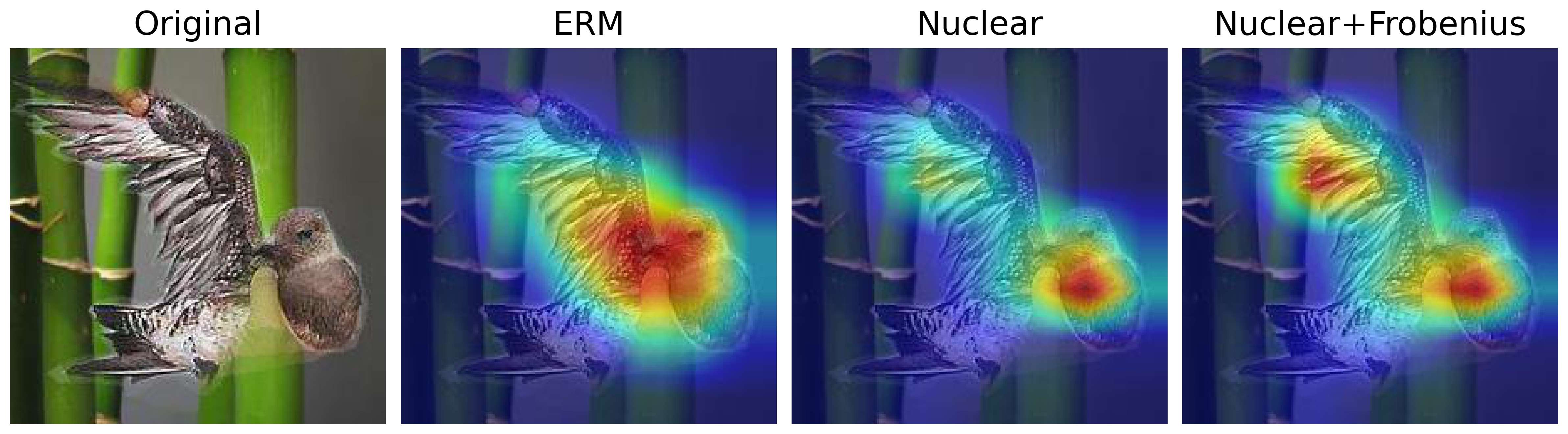}
\caption{A long-tailed Jaeger, a waterbird on a land background, from the waterbirds dataset \citep{sagawa2019distributionally}. The heat maps depict the pixel contributions to bird type prediction using Grad-CAM~\citep{Selvaraju_2019}. From left to right are the original image, ERM, ERM with nuclear norm, and ERM with nuclear and Frobenius norms, respectively. 
ERM learns features including background areas. ERM with nuclear norm focuses on the head, and ERM with both norms evenly emphasizes the head and the wing.}\label{fig:norms}
\end{figure}

To address this issue, we introduce a Frobenius-norm penalty to regularize the representation in addition to a nuclear-norm penalty. Analogous to the advantage of the elastic net over lasso, the Frobenius norm tunes down the sparsity and keeps more invariant features when they are correlated. 
We illustrate this finding in Figure~\ref{fig:norms} with an image of a waterbird on a land background. 
ERM without regularization captures features that include the object and background areas. When applying a nuclear norm, the learned features emphasize the bird's head but somewhat overlook the wing. 
So, the model may fail on images where a bird's head is blocked.
With both nuclear and Frobenius norms, the representation captures the head and wing, effectively regularizing the learning of the background and making both invariant features prominent without sacrificing either.

We distinguish ElRep from extant literature by making the following contributions. 
\begin{enumerate}
\item ElRep mitigates spurious correlation \emph{without relying on group information}, which is often required by many group robustness methods to adjust weights of minority groups. This is essential for real-world applications as group annotations are largely impractical.
\item We theoretically prove and empirically show that ElRep has a minimum sacrifice of the overall performance while \emph{improving the worst-group accuracy}. 
\item ElRep is simple yet effective without extra computational cost. It is a \emph{general framework} that can be combined with and further improve many state-of-the-art approaches. 
\end{enumerate}

The paper proceeds as follows. In Section~\ref{sec:literature}, We compare ElRep and related work for group robustness. 
In Section~\ref{sec:method}, we formally introduce the proposed method. 
In Section~\ref{sec:exp}, we use synthetic and real data to showcase the outstanding performance and favorable properties of ElRep. Section~\ref{sec:thy}  theoretically underpins ElRep, and
Section~\ref{sec:conclusion} concludes the paper.

\section{RELATED LITERATURE}\label{sec:literature}
Extensive efforts have been made to mitigate spurious correlations. Two of the common practices are optimization-based methods addressing group imbalance and via improved learning of invariant features. 
Our ElRep framework can be combined to improve an optimization-based method. It also supplements the representation learning literature with a much simpler procedure based on the finding that ERM already learns invariant features.
We review the literature in these two streams and refer readers to \citep{ye2024spurious} for a comprehensive taxonomy of extant popular approaches.
 
\looseness-1 Neural networks relying on spurious correlations often suffer from degradation of performance consistency across different groups or subpopulations. A typical reason is selection biases on
datasets \citep{ye2024spurious}, where groups are not equally represented. Imbalanced groups can lead neural networks to prioritize the majority and learn their spurious correlations that may not hold for the minority. A considerable amount of work addresses group imbalance by utilizing the group information for distributionally
robust optimization (DRO) to improve performance in worst cases. 
For example, groupDRO \citep{sagawa2019distributionally} minimizes the worst-group loss instead of the average loss, and there emerges subsequent work also emphasizing minority groups in training \citep[e.g.,][]{goel2020model,levy2020large,sagawa2020investigation,zhang2020coping,deng2024robust}. 
Notably, \citet{idrissi2022simple} show that simple group balancing by subsampling or reweighting achieves state-of-the-art accuracy, highlighting the importance of group information.

Though these approaches have improved worst-case accuracy, they rely on group annotations that are often impractical in real-world applications. Methods that automatically identify minority groups are developed. For example, one can use a biased model to find hard-to-classify data, treat them as a minority group, and then use a downstream model to improve the accuracy on the ``minority'' for group robustness \citep{nam2020learning,liu2021just,yenamandra2023facts}. 
These approaches train the models twice and may be computationally expensive.
To improve the efficiency, \citet{du2022less}, \citet{moayeri2023spuriosity}, and \citet{yang2024identifying} find data points or groups with high degrees of spuriosity in an early stage of training and then mitigate the model's reliance on them. 
Overall, the group information, either manually annotated or automatically identified, plays a crucial role in this stream of research that tries to address group imbalance. In stark comparison, ElRep does not require group information and is readily integrated into many of these optimization-based methods 
to further improve the performance.

Research in representation learning tries to better
understand the underlying relationships between variables, capture improved features, 
and make models more resilient to spurious correlations \citep[e.g.,][]{sun2021recovering,veitch2021counterfactual,eastwood2024spuriosity}. 
Recent studies \citep{kirichenko2022last,izmailov2022feature,rosenfeld2022domain,chen2024understanding,zhong2024bridging} potentially make representation learning easier by showing that ordinary ERM can learn both spurious and invariant feature representation. This implies that one can efficiently improve group robustness by downplaying spurious features and highlighting invariant features, without the need to explore causal relationships, making the process conceptually and computationally much simpler.

Based on this finding, \citet{kirichenko2022last} and \citet{izmailov2022feature} retrain the last layer of a neural network on a small held-out dataset where the
spurious correlation breaks. However, this method requires the group information. To avoid group annotations, one can combine the idea of automatic identification of ``minority groups'' and the last-layer fine-tuning. For example, \citet{chen2024understanding} alternately use easy- and hard-to-classify data to enforce the learning of richer features in the last layer. 
Similarly,  \citet{labonte2024towards} propose using disagreements between the ERM and early-stopped
models to balance the classes in the last-layer fine-tuning.

Since ERM can well learn both spurious and invariant features, a natural way for group robustness is to mitigate spurious correlations through regularization. However, this approach has not been thoroughly explored. 
We fill this research gap by imposing nuclear- and Frobenius-norm penalties to achieve a good balance between pruning spurious features and keeping invariant features.
A closely related study \citep{shi2024domain} uses a nuclear-norm regularization for parsimonious representation. However, as illustrated in Figure~\ref{fig:norms}, it may suffer from over-regularization and losing invariant features. ElRep introduces a Frobenius norm to alleviate this problem. Theoretically, this will maintain the in-distribution (ID) performance while making the invariant feature less sparse. Empirically, it outperforms using a nuclear norm alone and further improves state-of-the-art approaches when combined with them. 

\section{METHODOLOGY}\label{sec:method}
\subsection{Preliminaries and Notations}
We consider the setting where the domains of training and testing are different. We have $(\xv,y)\sim\mathcal{D}_{\mathrm{id}}$ for training data and $(\xv,y)\sim\mathcal{D}_{\text{ood}}$ for test data. The model we consider is $f(\xv)=W^\top\Phi(\xv)$, where $\Phi$ is a latent representation function. 
Our goal is to train the model with data from $\mathcal{D}_{\text{id}}$ and reduce the risk $\mathbb{E}_{(\xv,y)\sim\mathcal{D}_{\text{ood}}}\left[\ell(f(\xv),y)\right]$ on the test domain, where $\ell$ is a loss function. 
To achieve this goal, the representation $\Phi$ is trained to extract features of the input data. The features that generate data $\xv$ include invariant and spurious features, with the former only related to the label $y$ and the latter also related to the environment. Since the environment domains are different between the training and testing distributions, a good $\Phi$ should preserve invariant features and remove spurious features.
We use $\mathcal{L}(W,\Phi)$ to represent some risk function on the training domain with respect to a weight matrix $W$ and representation $\Phi$, where we omit the loss function $\ell$. We use $\|\cdot\|_*$ to denote the nuclear norm of a matrix and $\|\cdot\|_{\text{F}}$  the Frobenius norm.
Specifically, $\|A\|_*=\mathrm{Tr}\left((A^\top A)^{1/2}\right)$ and $\|A\|_\mathrm{F}=\left(\mathrm{Tr}(A^\top A)\right)^{1/2}$.  For vectors, $\|\cdot\|_2$ denotes its $\ell_2$ norm.

\subsection{Elastic Representation}
In classification and regression tasks, models learn features from labeled data. In order to make better predictions for OOD data, the model should learn the features that highly correlate to the label. Invariant features should have a higher correlation than spurious features since the former preserves in both ID and OOD data but the latter only appears in ID data. A latent representation $\Phi$ contains both kinds of features. Our goal is to highlight the invariant and eliminate the spurious.


We consider the model $f(\xv)=W^\top\Phi(\xv)$ with a latent representation $\Phi(\xv)$. By minimizing $\mathcal{L}(W,\Phi)$, we can obtain label-related features. However, the spurious features are also preserved, which cannot help OOD prediction. According to \cite{shi2024domain}, by adding the nuclear norm of the representation $\Phi$, the information contained in $\Phi(\xv)$ is reduced. 
This regularization eliminates spurious features but meanwhile, could also rule out part of invariant features. 
By Elastic Representation (ElRep) that includes an extra Frobenius-norm regularization, we expect to capture more invariant features. The objective function is 
\begin{equation}
    \min_{W,\Phi}\mathcal{L}(W,\Phi)+\lambda_1\left\|\Phi(\xv)\right\|_*+\lambda_2\left\|\Phi(\xv)\right\|_{\mathrm{F}}^2,
\end{equation}
where $\lambda_1$ and $\lambda_2$ are hyper-parameters that control the intensity of the respective penalty. 
Note that this regularization can be added to a wide range of risk functions, for example ERM and GroupDRO \citep{sagawa2019distributionally}. For ERM, the risk function $\mathcal{L}(W,\Phi):=\mathbb{E}_{(\xv,y)\in \mathcal{D}_{\mathrm{in}}}\left[\ell(f(\Phi(\xv)),y)\right]$.

\subsection{Thought Experiment}
\begin{figure}[h]
\vspace{-2mm}
    \centering
    \includegraphics[width=0.97\columnwidth]{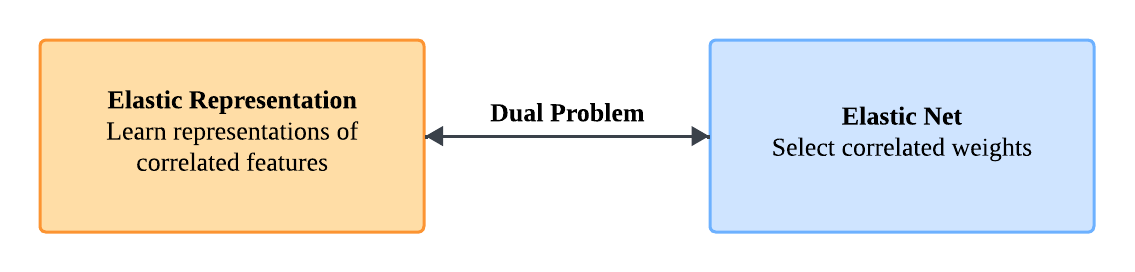}
        \vspace{-2mm}
    \caption{Connections between ElRep and elastic net. }
    \label{fig:diagram}
\end{figure}
To demonstrate the intuition behind the benefit of ElRep, we present a simple statistical thought experiment. First, regularizing on the representation $\Phi(x)$ is a dual problem to regularizing the weight $W$ (See Figure \ref{fig:diagram}):  Lasso or ElasticNet selects features by learning sparse model weight and thus zero-ing out the effect of the features corresponding to the zero weights. Meanwhile, nuclear norm or ElRep directly enforces learning low rank $\Phi(X)$ ((fewer number of features).  We illustrate the benefit of Elastic Net first. Consider two features $\Phi(x)_1$ and $\Phi(x)_2$ with a strong spurious correlation $\rho$ close to 1, but both features are equally important to predict $y$. If $\Phi(x)_1$ has a smaller magnitude, $\ell_1$ regularization will assign its associated weight $w_1$ to 0, while elastic net ($\ell_1+\ell_2$) tend to allocate non-zero elements in both $w_1$ and $w_2$ (since $\|[0.5,0.5]\|_2 < \|[0,1]\|_2$.) 
If for a target distribution the correlation between features changes, then $\ell_1$ regularization fails to utilize the information from $\Phi(x)_1$ to predict $y$. We defer a more precise analysis to Section~\ref{subsec:toy_theory}. Similarly, ElRep will also learn diverse features even if they might have some strong spurious correlation. 
Despite the connection between elastic net and ElRep, the latter is much better, since the success of elastic net depends on the quality of a pre-existing set of features to select from, and features learned through ERM 
may still have \emph{non-linear spurious correlations} or lack diversity. ElRep addresses these issues by directly learning more robust features.

\section{EXPERIMENTS}\label{sec:exp}
In this section, we evaluate the effectiveness of ElRep on both synthetic and real data. For synthetic data, we design a setting where our method demonstrates advantages in terms of loss minimization and sparsity. For real data, we consider three popular benchmark datasets in the presence of spurious features: CelebA \citep{liu2015deeplearningfaceattributes}, Waterbirds \citep{sagawa2019distributionally}, and CivilComments-WILDS \citep{koh2021wildsbenchmarkinthewilddistribution}. We present the worst-group accuracy, which assesses the minimum accuracy across all groups and is commonly used to evaluate the model’s robustness against spurious correlations. Overall prediction accuracy is also reported to demonstrate minimum impacts of our approach on ID predictions.

\subsection{Synthetic Data}
\paragraph{Data generating process.} We design $T=3$ domains for training and one unseen domain for testing. We follow a similar data-generating procedure demonstrated in \citep{lu2021nonlinear}: we have a common label-related parameter $C$ to generate invariant features for data in all four domains. For each domain, there is a domain-specific environment $E_i$, $i=1,2,3,4$. For each data point $\xv$, we assume there are three non-trainable functions extracting three different types of features, respectively. The first type is invariant feature $\zv_1(\xv)\in \mathbb{R}^d$, which only depends on $C$. The second $\zv_2(\xv)\in\mathbb{R}^d$ is named nuanced features generated by both $C$ and $E_i$ so it has a weak correlation to the label. The third feature $\zv_3(\xv)\in \mathbb{R}^{k\times d}$ is spurious and generated by $E_i$ only. Here, $k$ is a hyper-parameter that controls the dimension of spurious feature and we choose $k=3$ in the experiment. 
Consequently, the representation has dimension $(k+2)\times d$.
 
\paragraph{Model and objectives.} For simplicity, we set $W=[1,1,\dots,1]$ that is not trainable and a linear representation $\Phi$. Specifically, we define $$\Phi(\xv)=[\av_1^\top\odot\zv_1(\xv)^\top,\av_2^\top\odot\zv_2(\xv)^\top,\av_3^\top\odot\zv_3(\xv)^\top],$$
where $\odot$ is the element-wise product.
Denote $\av=[\av_1^\top,\av_2^\top,\av_3^\top]^\top.$ The ground true label is generated by a representation $\Phi^*(\xv)=\av^*\odot\zv(\xv)$ plus a random noise, where $\av_3^*=0$. 
We provide the data generating process in the appendix.
The nuclear- and Frobenius-norms are reduced to $\ell_1$- and $\ell_2$-norms of $\av$, respectively.
The objective function for training is 
$$\min_{\av}\frac{1}{2nT}\sum_{t=1}^T\sum_{i=1}^n\left(y_{ti}-f(\xv_{ti})\right)^2+R(\av).$$
Our goal is small mean squared errors (MSEs) in the unseen domain.
In the experiment, we consider three different forms of the regularizer $R(\av)$: $\lambda\|\av\|_1$, $\lambda\|\av\|_2^2$, and $\lambda_1\|\av\|_1+\lambda_2\|\av\|_2^2$.
We expect that a $\Phi$ with more non-zero elements in the representation of invariant features and more zero elements for spurious features leads to a better performance on OOD predictions. 

\paragraph{Results.}
We optimize the loss with the three different forms of $R(\av)$ and without $R(\av)$ (i.e., ERM), respectively. We run the simulation 50 times independently and compare the MSE of the training set, ID testing set, and OOD set.
The result is shown in Table \ref{tbl:synthetic mse}. 
Unsurprisingly, ERM has the lowest training MSE but the test error is significantly larger than using the regularized objectives for both ID and OOD tests. Notebaly, using both $\ell_1$ and $\ell_2$ penalties achieves the smallest ID and OOD test errors, and performance is consistent as reflected by the smallest standard errors. 

\begin{table}[!t]
\centering
\vspace{-2mm}
\caption{The MSE (mean $\pm$ standard error) for different objectives on training data, ID test data, and OOD data. The best in OOD generalization is highlighted in bold. The results are averaged over 50 trials.}
\label{tbl:synthetic mse}
\resizebox{1\linewidth}{!}{
\begin{tabular}{l|ccc}
\toprule
& Training  & ID test  & OOD \\
\midrule
ERM & $0.0009_{\pm 0.0005}$ & $29.30_{\pm 10.56}$ & $63.90_{\pm 23.64}$ \\
$\ell_1$ regularization & $0.22_{\pm 0.03}$ & $3.29_{\pm 0.69}$ & $12.82_{\pm 4.60}$ \\
$\ell_2$ regularization & $0.10_{\pm 0.01}$ & $3.59_{\pm 0.79}$ & $13.62_{\pm 4.29}$ \\
$\ell_1+\ell_2$ & $0.17_{\pm 0.02}$ & $\mathbf{3.16_{\pm 0.67}}$ & $\mathbf{11.77_{\pm 3.83}}$ \\
\bottomrule
\end{tabular}}
\vspace{-2mm}
\end{table}

We also examined  $\av_1,\av_2,\av_3$ learned by different objectives. In particular, we compare the proportion of zero elements for each type of features between using $\ell_1$ regularization alone and using the $\ell_1$ and $\ell_2$. The result is presented in Table~\ref{tbl:synthetic coefs}. The average number of zero elements from $\ell_1$ regularized loss is larger for all the three types of features. Using both $\ell_1$ and $\ell_2$ helps extract more information from invariant and nuanced features but more spurious features are also captured, implying a trade-off between preserving label-related features and eliminating environmental features. One can address this issue by mannually adjust $\lambda$'s, and we will show their impacts, shortly.

\begin{table}[!t]
\centering
\caption{The average proportion of zero elements for different types of features among 50 trials. The optimized features from $\ell_1$ regularization is sparser than $\ell_1+\ell_2$ for all kinds of features.}
\label{tbl:synthetic coefs}
\resizebox{1\linewidth}{!}{
\begin{tabular}{l|ccc}
\toprule
Feature & Invariant  & Nuanced & Spurious \\
\midrule
$\ell_1$ regularization & $0.493_{\pm 0.044}$ & $0.259_{\pm 0.044}$ & $0.676_{\pm 0.023}$ \\
$\ell_1+\ell_2$ & $0.348_{\pm 0.043}$ & $0.168_{\pm 0.036}$ & $0.560_{\pm 0.023}$ \\
\bottomrule
\end{tabular}}
\end{table}

\begin{table*}[h!]
\caption{The worst-group and average accuracy (\%) of ElRep compared with state-of-the-art methods. The best worst-group accuracy is highlighted in \textbf{bold}. The best average accuracy is also highlighted in bold if the worst-group accuracy is the same for multiple methods. Performance is evaluated on the test set with models early stopped at the highest worst-group accuracy on the validation set. N/A means no result is reported for UW on CivilComments, therefore we do not test our approach for this particular setting.} \label{tbl:main}
\centering
\resizebox{.8\linewidth}{!}{
\begin{tabular}{l|cc|cc|cc}
    \toprule
    \multirow{2}{*}{{Method}} & \multicolumn{2}{c|}{{Waterbirds}} & \multicolumn{2}{c|}{{CelebA}} & \multicolumn{2}{c}{{CivilComments}} \\
    & {Worst} & {Average} & {Worst} & {Average} & {Worst} & {Average} \\
    \midrule
    ERM & 70.0$_{\pm 2.3}$ & 97.1$_{\pm 0.1}$ & 45.0$_{\pm 1.5}$ & 94.8$_{\pm 0.2}$ & 58.2$_{\pm 2.8}$ & 92.2$_{\pm 0.1}$ \\
    UW & 88.0$_{\pm 1.3}$ & 95.1$_{\pm 0.3}$ & 83.3$_{\pm 2.8}$ & 92.9$_{\pm 0.2}$ & N/A & N/A\\
    Subsample & 86.9$_{\pm 2.3}$ & 89.2$_{\pm 1.2}$ & 86.1$_{\pm 1.9}$ & 91.3$_{\pm 0.2}$ & 64.7$_{\pm 7.8}$ & 83.7$_{\pm 3.4}$ \\
    GroupDRO & 86.7$_{\pm 0.6}$ & 93.2$_{\pm 0.5}$ & 86.3$_{\pm 1.1}$ & 90.5$_{\pm 0.3}$ & 69.4$_{\pm 0.9}$ & 89.6$_{\pm 0.5}$ \\
    PDE & 90.3$_{\pm 0.3}$ & 92.4$_{\pm 0.8}$ & 91.0$_{\pm 0.4}$ & 92.0$_{\pm 0.5}$ & 71.5$_{\pm 0.5}$ & 86.3$_{\pm 1.7}$ \\
    \midrule
    ERM+ElRep   & 79.8$_{\pm 0.7}$ & 89.5$_{\pm 0.7}$ & 52.6$_{\pm 1.4}$ & 95.5$_{\pm 0.2}$ & 60.5$_{\pm 1.6}$ & 91.5$_{\pm 0.2}$ \\
    UW+ElRep   & 89.1$_{\pm 0.5}$ & 92.5$_{\pm 0.3}$ & 90.2$_{\pm 0.7}$ & 92.4$_{\pm 0.3}$ & N/A & N/A \\
    Subsample+ElRep   & 88.7$_{\pm 0.3}$ & 90.8$_{\pm 0.7}$ & 89.6$_{\pm 0.3}$ & 91.1$_{\pm 0.5}$ & 70.8$_{\pm 0.5}$ & 82.1$_{\pm 0.5}$ \\
    GroupDRO+ElRep   & 88.8$_{\pm 0.7}$ & 92.9$_{\pm 0.7}$ & \textbf{91.4}$_{\pm 1.0}$ & \textbf{92.8}$_{\pm 0.2}$ & 70.5$_{\pm 0.5}$ & 79.0$_{\pm 0.7}$\\
    PDE+ElRep  & \textbf{90.4}$_{\pm 0.2}$ & 91.6$_{\pm 0.7}$ & \textbf{91.4}$_{\pm 0.5}$ & 92.4$_{\pm 0.3}$ & \textbf{71.7}$_{\pm 0.2}$ & 80.7$_{\pm 0.9}$ \\
    \bottomrule
\end{tabular}}
\end{table*}
\subsection{Real Data}
\paragraph{Datasets.}(1) \textbf{CelebA} is  comprised of 202,599 face images. We use it for hair-color classification with gender as the spurious feature. The smallest group is blond-hair men, which make up only 1\% of total data, and over 93\% of blond-hair examples are women. (2) \textbf{Waterbirds} is crafted by placing birds \citep{wah_branson_welinder_perona_belongie_2011} on land or water backgrounds \citep{7968387}. The goal is to classify birds as landbirds or waterbirds, and the spurious feature is the background. The smallest group is waterbirds on land. (3) \textbf{CivilComments-WILDS} is used to classify whether an online
comment is toxic or not, and the label is spuriously correlated with mentions of eight demographic identities (DI), i.e. (male, female, White, Black, LGBTQ, Muslim, Christian, other religions). There are 16 group combinations, i.e., (DI, toxic) and (DI, non-toxic).

\paragraph{Baseline Models.} Extant group robustness methods can be categorized into train-once and train-twice, as discussed in Section~\ref{sec:literature}. The former often relies on the results from a single run.  
The latter, such as \citep{liu2021just}, requires running the training procedure in two stages to achieve ideal performance. In this paper, we compare the proposed ElRep against several state-of-the-art train-once methods, but ours is also readily combined with the train-twice approaches.  
Apart from standard ERM, we integrate the ElRep into several state-of-the-art methods, including Upweighting (UW) that inversely reweights group losses by group sizes, GroupDRO \citep{sagawa2019distributionally} that directly optimizes the worst group loss, the more recent PDE \citep{deng2024robust} that trains on a balanced subset of data then progressively expands the training set, and Subsample \citep{deng2024robust}, a simplified version of PDE without the expansion stage. We compare the performance of these methods with and without ElRep.

\paragraph{Experiment Setup.}\label{exp:setup}
We strictly follow the training and evaluation protocols used for the three datasets in previous studies \citep{piratla2022focuscommongoodgroup, deng2024robust}. The experiments are implemented based on the WILDS package \citep{koh2021wildsbenchmarkinthewilddistribution} which uses pretrained ResNet-50 model \citep{he2015deepresiduallearningimage} from Torchvision for CelebA and Waterbirds, and pretrained Bert model \citep{devlin2019bertpretrainingdeepbidirectional} from HuggingFace for CivilComments-WILDS. All experiments were conducted on a single NVIDIA RTX 8000 GPU with 48GB memory. Our code is available at \url{https://github.com/TaoWen0309/ElRep}.

We follow previous work and run all experiments with three different random seeds and report the mean and standard deviation of worst-group and average accuracy. For a fair comparison, the baseline performance is the original results from recent studies \citep{10.5555/3618408.3619982, deng2024robust, phan2024controllableprompttuningbalancing}. We have not modified their published code or hyper-parameters except for adding the regularization. Also, we do not report the performance of UW on the CivilComments dataset since it has not been benchmarked by extant work.
We select the hyper-parameters for the nuclear and Frobenius norms by cross-validation with candidate $\lambda_1$ between $10^{-4}$ and $10^{-3}$ and candidate $\lambda_2$ between $10^{-5}$ and $10^{-4}$.

\subsubsection{Average and Worst-Group Accuracy}
We compare the performance of ERM, UW, Subsample, GroupDRO, and PDE with and without ElRep and report in Table~\ref{tbl:main} their average and worst-group prediction accuracy. 
As a result, the proposed ElRep improves all the state-of-the-art methods compared in worst-group accuracy (the top half versus the bottom half of the table), demonstrating its effectiveness in group robustness.
The best worst-group accuracy is achieved by  GroupDRO or PDE together with ElRep.  
The improvement is more pronounced if ElRep is combined with a more naive model. For example, ERM has been improved by $6.6$ percentage on average. We show how much these extant methods are improved by ElRep in the left panel of Figure~\ref{fig:bar_accu}. 

Furthermore, ElRep helps reduce performance fluctuation. Specifically, the standard deviation of the worst-group accuracy is typically smaller when a method is combined with ElRep, suggesting its consistently effective learning of invariant features, which may be indispensable for domain generalization. 
Although enhanced group robustness is often achieved at the cost of reduced overall accuracy, we observe that ElRep simultaneously improves both average and worst-group accuracy for several baselines on the waterbirds and CelebA datasets, which is shown in the right panel of Figure~\ref{fig:bar_accu}. This is attributed to the theoretical underpinning that ElRep does not undermine ID prediction, as shown in Section~\ref{sec:thy}, shortly.

\begin{figure}[!t]
\includegraphics[width=\linewidth]{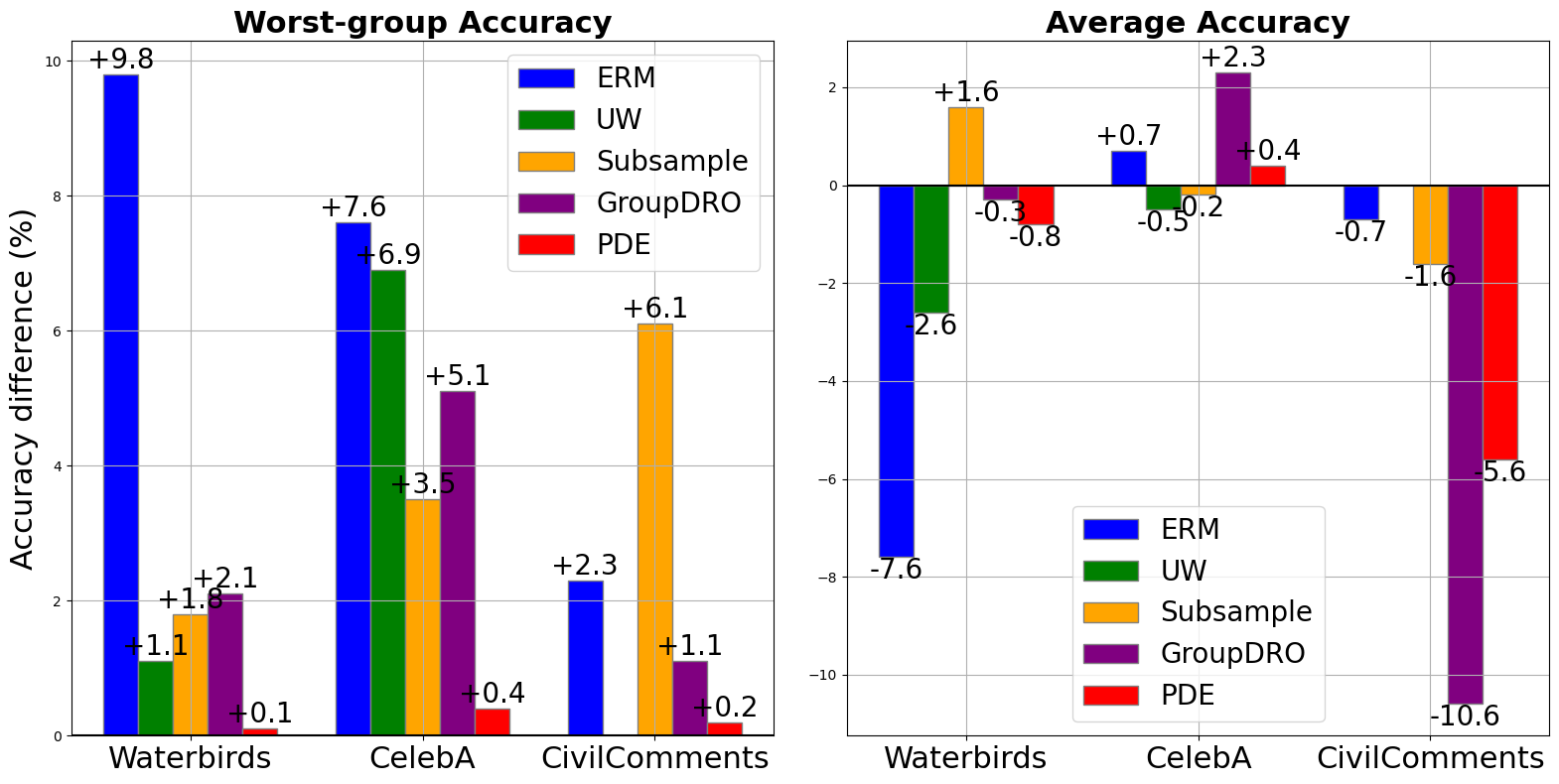}
\caption{Left: The difference in the worst-group accuracy between the baseline methods with and without ElRep. The improvement is ubiquitous among all the methods compared on all the three datasets.  
Right: The difference in the average accuracy between the baseline methods with and without ElRep. Usually, an increase in worst-group accuracy comes with a decrease in average accuracy. Our approach can also improve the average accuracy for some baselines on the image datasets.
}\label{fig:bar_accu}
\end{figure}

\begin{table*}[h!]
\caption{The worst-group and average accuracy (\%) of our approach compared with nuclear Norm (NN) or Frobenius Norm (FN) alone. The experiment settings are the same as in Table \ref{tbl:main}. ElRep achieves the best worst-group performance in almost all settings. 
}\label{tbl:ablation}
\centering
\resizebox{.75\linewidth}{!}{
\begin{tabular}{l|cc|cc|cc}
    \toprule
    \multirow{2}{*}{{Method}} & \multicolumn{2}{c|}{{Waterbirds}} & \multicolumn{2}{c|}{{CelebA}} & \multicolumn{2}{c}{{CivilComments}} \\
    & {Worst} & {Average} & {Worst} & {Average} & {Worst} & {Average} \\
    \midrule
    ERM (FN) & 78.0$_{\pm 0.3}$ & 89.0$_{\pm 0.2}$ & 43.9$_{\pm 4.0}$ & 95.5$_{\pm 0.1}$ & 58.9$_{\pm 1.0}$ & 91.6$_{\pm 0.1}$\\
    ERM (NN) & 78.8$_{\pm 0.3}$ & 89.6$_{\pm 0.5}$ & 44.1$_{\pm 4.7}$ & 95.5$_{\pm 0.1}$ & 59.3$_{\pm 0.2}$ & 91.9$_{\pm 0.2}$\\
    ERM (Ours) & \textbf{79.8}$_{\pm 0.4}$ & 89.5$_{\pm 0.4}$ & \textbf{52.6}$_{\pm 0.8}$ & 95.5$_{\pm 0.1}$ & \textbf{60.5}$_{\pm 0.9}$ & 91.5$_{\pm 0.1}$ \\
    \midrule
    UW (FN) & 88.2$_{\pm 0.6}$ & 92.1$_{\pm 0.1}$ & 89.4$_{\pm 0.5}$ & 92.5$_{\pm 0.2}$ & \multicolumn{2}{c}{\multirow{3}{*}{N/A}} \\
    UW (NN) & 88.4$_{\pm 0.6}$ & 92.0$_{\pm 0.1}$ & 89.7$_{\pm 0.3}$ & 92.2$_{\pm 0.3}$ & & \\
    UW (Ours) & \textbf{89.1}$_{\pm 0.3}$ & 92.5$_{\pm 0.2}$ & \textbf{90.2}$_{\pm 0.4}$ & 92.4$_{\pm 0.2}$ & & \\
    \midrule
    Subsample (FN) & \textbf{89.1}$_{\pm 0.3}$ & 90.9$_{\pm 0.4}$ & 87.8$_{\pm 0.5}$ & 91.9$_{\pm 0.2}$ & 70.3$_{\pm 0.4}$ & 81.2$_{\pm 0.4}$\\
    Subsample (NN) & 88.7$_{\pm 0.1}$ & 91.0$_{\pm 0.3}$ & 88.9$_{\pm 0.5}$ & 91.3$_{\pm 0.1}$ & 70.5$_{\pm 0.3}$ & 80.5$_{\pm 0.3}$\\
    Subsample (Ours) & 88.7$_{\pm 0.2}$ & 90.8$_{\pm 0.4}$ & \textbf{89.6}$_{\pm 0.2}$ & 91.1$_{\pm 0.3}$ & \textbf{70.8}$_{\pm 0.3}$ & 82.1$_{\pm 0.3}$ \\
    \midrule
    GroupDRO (FN) & 88.7$_{\pm 0.5}$ & 92.5$_{\pm 0.3}$ & 90.8$_{\pm 0.2}$ & 93.1$_{\pm 0.1}$ & 69.9$_{\pm 0.5}$ & 78.2$_{\pm 0.5}$\\
    GroupDRO (NN) & 86.8$_{\pm 0.9}$ & 92.4$_{\pm 0.4}$ & 90.8$_{\pm 1.0}$ & 92.8$_{\pm 0.3}$ & \textbf{70.5}$_{\pm 0.5}$ & \textbf{79.2}$_{\pm 0.7}$\\
    GroupDRO (Ours) & \textbf{88.8}$_{\pm 0.4}$ & 92.9$_{\pm 0.4}$ & \textbf{91.4}$_{\pm 0.6}$ & 92.8$_{\pm 0.1}$ & \textbf{70.5}$_{\pm 0.3}$ & 79.0$_{\pm 0.4}$\\
    \midrule
    PDE (FN) & 89.8$_{\pm 0.1}$ & 91.4$_{\pm 0.1}$ & 90.2$_{\pm 0.4}$ & 91.7$_{\pm 0.2}$ & 70.2$_{\pm 0.1}$ & 80.8$_{\pm 0.7}$\\
    PDE (NN) & 89.8$_{\pm 0.2}$ & 91.2$_{\pm 0.3}$ & \textbf{91.4}$_{\pm 0.3}$ & 91.9$_{\pm 0.3}$ & 71.0$_{\pm 0.3}$ & 82.2$_{\pm 0.5}$\\
    PDE (Ours) & \textbf{90.4}$_{\pm 0.1}$ & 91.6$_{\pm 0.4}$ & \textbf{91.4}$_{\pm 0.3}$ & \textbf{92.4}$_{\pm 0.2}$ & \textbf{71.7}$_{\pm 0.1}$& 80.7$_{\pm 0.5}$ \\
    \bottomrule
\end{tabular}}
\end{table*}

\subsubsection{Ablation Study of the Regularization}
\paragraph{Regularization by either nuclear- or Frobenius-norm. }
The advantage of ElRep comes from the combination of a nuclear norm and a Frobenius norm. We consider only using either of them and compare the performance. As reported in Table~\ref{tbl:ablation}, in most cases, our approach is the best. Removing either norm would lead to a degradation of worst-group accuracy, and sometimes, it even underperforms the method without regularization, like ERM on CelebA. 
In addition, our results show that using one norm does not consistently outperform using the other.

\paragraph{Regularization via Weight Decay.}
Though intuitively similar to the elastic net, we regularize the representation rather than the weights. We compare the proposed method with weight decay (WD), which imposes an $\ell_2$ penalty on the weights of the linear classification layer of a neural network. 

We leave CivilComments out for a fair comparison because the Bert model uses its own learning schedule, and magnified weight decay can undermine its performance.
The results in Table \ref{tbl:wd} indicate that ours is better than regularization on the weights in group robustness at a minimum cost of average accuracy. 

\begin{table}[!t]
\caption{The accuracy (\%) of ERM with weight decay (WD) and ElRep. ElRep significantly outperforms WD in worst-group performance with minimal sacrifice of average accuracy.}\label{tbl:wd}
\centering
\resizebox{\linewidth}{!}{ 
\begin{tabular}{l|cc|cc}
    \toprule
    \multirow{2}{*}{{Method}} & \multicolumn{2}{c|}{{Waterbirds}} & \multicolumn{2}{c}{CelebA}\\
    & {Worst} &  {Average} &  {Worst} &  {Average}\\
    \midrule
    ERM+WD & 78.9$_{\pm 0.6}$ & 89.7$_{\pm 0.6}$ & 44.8$_{\pm 3.4}$ & 95.8$_{\pm 0.1}$ \\
    \midrule
    ERM+ElRep & 79.8$_{\pm 0.4}$ & 89.5$_{\pm 0.4}$ & 52.6$_{\pm 0.8}$ & 95.5$_{\pm 0.1}$ \\
    \bottomrule
\end{tabular}}
\end{table}

\subsubsection{Regularization Intensity 
}
\label{exp:sen}
We study the influence of the regularization intensities. Specifically,  $\lambda_1$ and $\lambda_2$ control the nuclear-norm and Frobenius-norm penalties, respectively, and their values affect the model performance. Too small values cannot effectively regularize spurious correlations, while too large values make the penalties overwhelm the classification loss. In Figure~\ref{fig:lambda}, we try various values of $\lambda$ within a reasonable range on CelebA, and show the accuracy on each group and the average accuracy. An obvious trend can be observed that the minority-group (blonde hair) accuracy gradually increases with the value of $\lambda_1$ or $\lambda_2$. If $\lambda$ is sufficiently large the minority group accuracy would eventually surpass the average accuracy. The opposite trend can be observed for the majority groups (non-blonde females and males).

\begin{figure}[!t]
\includegraphics[width=\linewidth]{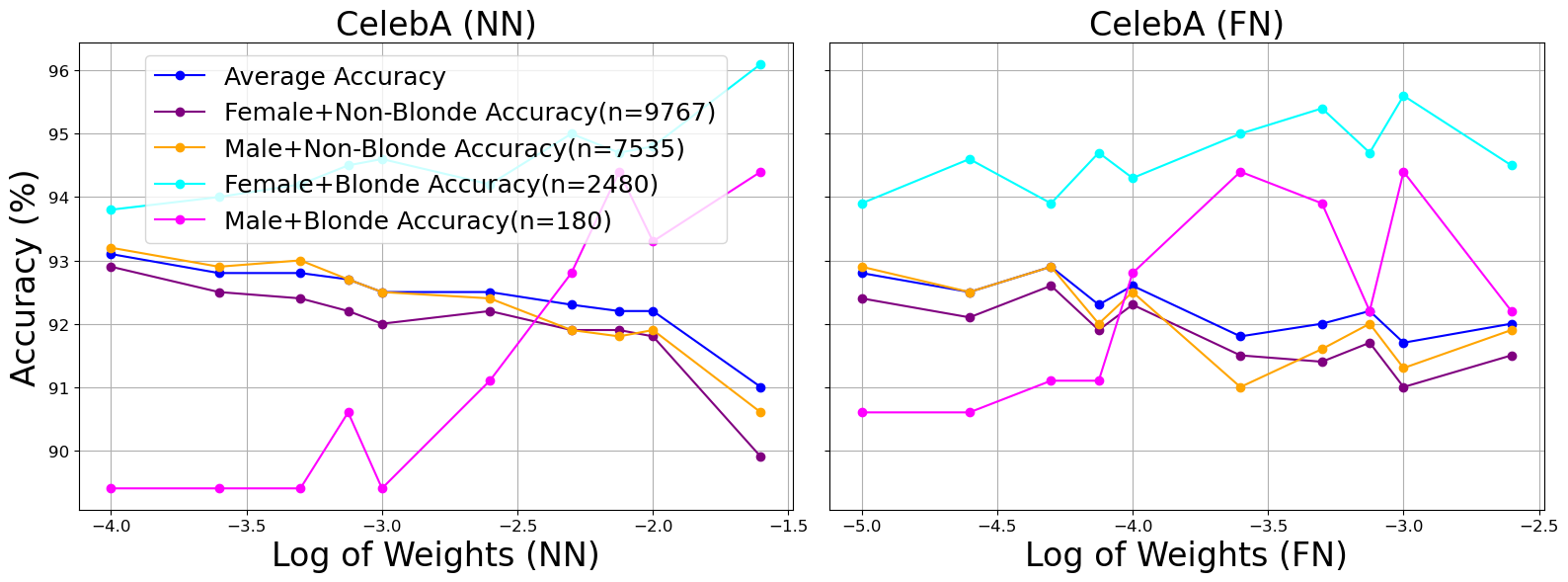}
\caption{Accuracy per group and average accuracy against the log of $\lambda_1$ (left) and $\lambda_2$ (right). As their value increases, the accuracy of the two minority groups will gradually increase and eventually surpass the average accuracy. The trend is reversed for the two majority groups.}\label{fig:lambda}
\end{figure}

To further validate this observation, we randomly downsample the original majority groups, i.e. non-blonde-hair females and males to approximately 1\%. By Figure~\ref{fig:lambda_sub}, we can observe the same trend although the roles of majority and minority groups are now switched compared to  Figure \ref{fig:lambda}. This observation is useful in cases where we only care about small group accuracy since we can set arbitrarily large values for $\lambda_1$ and $\lambda_2$, as long as the regularization term does not overwhelm the classification loss.

\begin{figure}[!t]
\includegraphics[width=\linewidth]{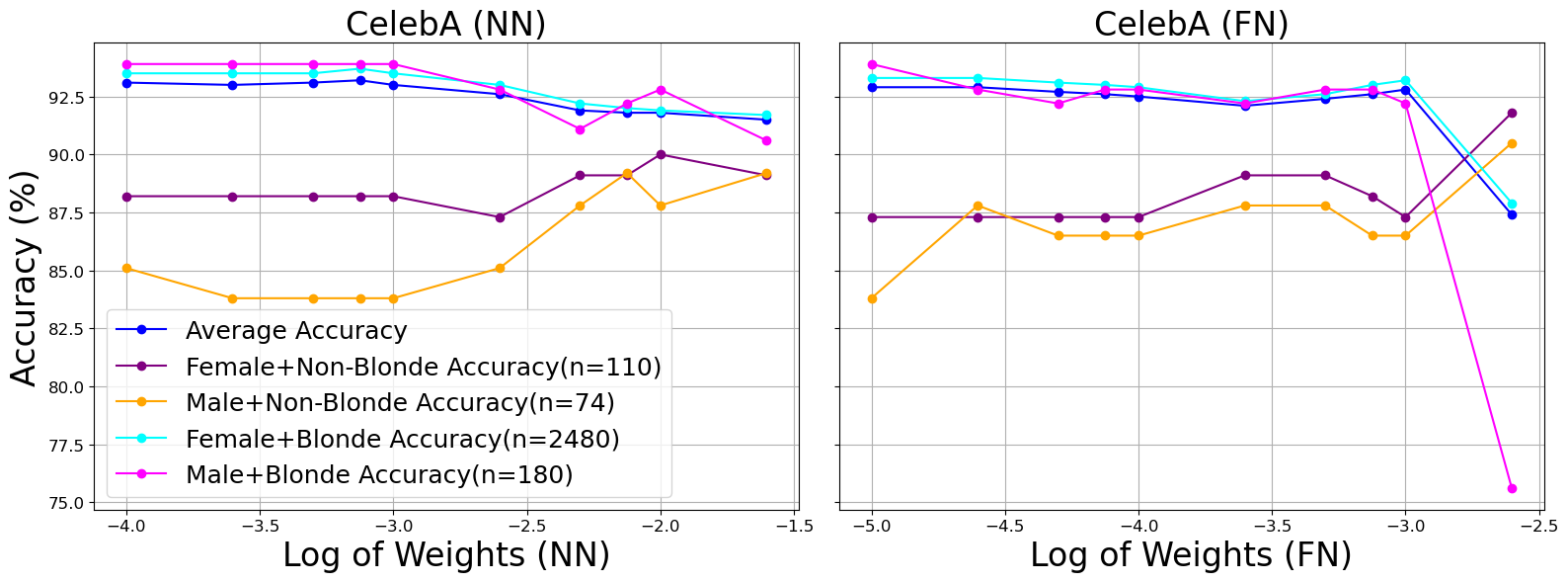}
\caption{The two majority groups downsampled to about 1\%. Reversed trends are observed. 
}\label{fig:lambda_sub}
\end{figure}

\section{THEORETICAL ANALYSIS}\label{sec:thy}
In this section, we provide some theoretical analysis to ElRep, showing that 1) the regularization term will not hurt ID prediction and 2) adding a Frobenius norm term towards nuclear norm penalty can effectively capture more invariant features.

\subsection{ID Prediction}
When training deep learning models, regularization is used to prevent overfits. Previous sections illustrated that ElRep makes OOD prediction more accurate by introducing nuclear- and Frobenius-norm penalties, mitigating an over-regularization of invariant features. However, regularization may hurt ID performance. In this section, we show that the regularization of ElRep does not hurt ID prediction. 

\paragraph{Settings.} 
In our analysis, we consider a regression problem on space $\mathcal{X}\times\mathcal{Y}$, where $\mathcal{X}\subseteq \mathbb{R}^d$ and $\mathcal{Y}\subseteq\mathbb{R}$. We set the model be a linear regression problem $f(\xv)=\theta^\top \xv$ for simplicity. In multi domain learning, there are $T$ different training domains. For each domain, every sample in $X_t\in\mathbb{R}^{n\times d}$ is generated from a distribution $p_t$ supported on $\mathcal{X}$. We assume that $\mathbb{E}_{\xv\sim p_t}\xv=0$ and $\mathbb{E}_{\xv\sim p_t}\xv\xv^\top=\Sigma_t$. Then for $\bar{\xv}=\Sigma^{-1/2}\xv$ generated from $\bar{p}_t$, $\mathbb{E}_{\bar{\xv}\sim\bar{p}_t}\bar{\xv}\bar{\xv}^\top=I.$ The labels $Y_t\in\mathbb{R}^n$ is generated by $Y_t=X_t\theta^*+\epsilon$, where $\Theta^*$ is the ground truth parameter and $\epsilon\sim\mathcal{N}(0,\sigma I_n)$.
\begin{assum}
\label{assum:data}
    There exists a positive semi-definite matrix $\Sigma$ such that $\Sigma_t\preceq \Sigma$ for any $t$.
\end{assum}

\begin{assum}
    \label{assum:subgaussian}
    There exists $\rho>0$ such that the random vector $\bar{\xv}\sim \bar{p}_t$ is $\rho^2$-subgaussian for any $t$.
\end{assum}

\paragraph{Objective.} In the multi-task regression with ElRep, we minimize the following objective
\begin{equation}
\label{eq:objective}
\begin{split}
       \min_{\theta\in \mathbb{R}^d}&\frac{1}{2nT}\|\mathcal{X}(\theta)-Y\|_F^2+\lambda_1\|\theta\|_1+\lambda_2\|\theta\|_2,
\end{split}
\end{equation}
where $\mathcal{X}(\theta)=[X_1\theta,\cdots, X_T\theta]\in\mathbb{R}^{n\times T}$. Note that we penalize both $l_1$ and $l_2$ norm of the regression weight $\theta$, which has a similar effect of penalizing the representation in representation learning setting. 

\paragraph{Theoretical results.} If we denote the solution of \eqref{eq:objective} by $\hat{\theta}$, we are interested in the population excess risk, i.e. $\frac{1}{2T}\sum_{t=1}^T\mathbb{E}_{ p_t}\|X\Delta\|_\mathrm{F}^2$, where $\Delta=\hat{\theta}-\theta^*.$ The following theorem gives an upper bound.
\begin{thm}
\label{thm:ID bound}
    Under Assumption \ref{assum:data} and \ref{assum:subgaussian}, we fix a failure probability $\delta$ and choose proper $\lambda_1,\lambda_2,\lambda_3$. Then with probability at least $1-\delta$ over training samples, the prediction difference between our approach and the ground truth satisfies:
    \begin{equation}
        \frac{1}{2T}\sum_{t=1}^T\mathbb{E}_{ p_t}\|X\Delta\|_\mathrm{F}^2\le \Tilde{O}\left(\frac{\sigma R\sqrt{\mathrm{Tr}(\Sigma)}}{\sqrt{nT}}\right)+\Tilde{O}\left(\frac{\rho^4R^2\mathrm{Tr}(\Sigma)}{nT}\right),
    \end{equation}
    where $R=\|\theta^*\|_1$ and we omit logarithmic factors.
\end{thm}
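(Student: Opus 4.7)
The plan is to adapt the standard slow-rate Lasso/elastic-net analysis to the multi-domain setting and then upgrade the resulting empirical-risk bound to a population excess-risk bound via subgaussian concentration. Write $\Delta=\hat\theta-\theta^*$ and note that $\|\mathcal{X}(\Delta)\|_\mathrm{F}^2=\sum_{t=1}^T\|X_t\Delta\|_2^2$, so by Assumption~\ref{assum:data} it suffices to control $\frac{1}{T}\sum_t \Delta^\top \Sigma_t \Delta\le \Delta^\top \Sigma\Delta$ up to the desired rate.

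First I would use the optimality of $\hat\theta$ in \eqref{eq:objective} together with $Y_t=X_t\theta^*+\epsilon_t$ to obtain the basic inequality
\begin{equation*}
\frac{1}{2nT}\|\mathcal{X}(\Delta)\|_\mathrm{F}^2 \le \frac{1}{nT}\sum_{t=1}^T\langle\epsilon_t, X_t\Delta\rangle + \lambda_1(\|\theta^*\|_1-\|\hat\theta\|_1)+\lambda_2(\|\theta^*\|_2-\|\hat\theta\|_2).
\end{equation*}
The noise term is handled by H\"older, $\frac{1}{nT}\sum_t\langle\epsilon_t,X_t\Delta\rangle \le \bigl\|\frac{1}{nT}\sum_t X_t^\top\epsilon_t\bigr\|_\infty \|\Delta\|_1$. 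Conditioning on $X$, each coordinate of $\frac{1}{nT}\sum_t X_t^\top\epsilon_t$ is Gaussian with variance at most $\sigma^2 \max_j\Sigma_{jj}/(nT)$, so a Gaussian tail union bound across the $d$ coordinates yields $\bigl\|\frac{1}{nT}\sum_t X_t^\top\epsilon_t\bigr\|_\infty \lesssim \sigma\sqrt{\mathrm{Tr}(\Sigma)\log(d/\delta)/(nT)}$ with probability at least $1-\delta/2$.

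Next I would choose $\lambda_1$ to be twice this concentration bound and $\lambda_2$ of the same order; the noise contribution is then absorbed into $\frac{3}{2}\lambda_1\|\Delta\|_1$, and rearranging the basic inequality once more shows $\|\hat\theta\|_1 = O(R)$, hence $\|\Delta\|_1\le 2R$ by the triangle inequality (and similarly $\|\Delta\|_2\le 2R$). Substituting gives the empirical-risk estimate
\begin{equation*}
\frac{1}{nT}\|\mathcal{X}(\Delta)\|_\mathrm{F}^2 \;\lesssim\; \lambda_1 R \;\lesssim\; \frac{\sigma R\sqrt{\mathrm{Tr}(\Sigma)\log(d/\delta)}}{\sqrt{nT}},
\end{equation*}
which matches the first term of the theorem up to logarithmic factors.

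Finally I would convert this empirical bound into a population bound. By Assumption~\ref{assum:subgaussian}, a Hanson--Wright-type inequality applied to the quadratic form $\Delta^\top\bigl(\frac{1}{nT}\sum_t X_t^\top X_t-\frac{1}{T}\sum_t\Sigma_t\bigr)\Delta$, uniformized over the $\ell_1$-ball $\{\Delta:\|\Delta\|_1\le 2R\}$ via a covering/duality argument, contributes an extra deviation of order $\rho^4 R^2\mathrm{Tr}(\Sigma)/(nT)$ up to log factors, exactly matching the second term. Combining the two pieces completes the proof. The main obstacle is precisely this last step: since $\Delta$ is data-dependent, one cannot invoke Hanson--Wright pointwise, and one must either cover the restricted $\ell_1$-ball (at the cost of a $\log d$ factor) or use a peeling/localization argument that preserves the $R^2\mathrm{Tr}(\Sigma)/(nT)$ scaling. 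A secondary subtlety is calibrating the constants in $\lambda_1,\lambda_2$ so that neither the regularization slack nor the empirical-to-population gap inflates the stated two-term rate.
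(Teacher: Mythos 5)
Your first two steps are sound and structurally parallel to the paper's: the basic inequality from optimality of $\hat\theta$, absorption of the noise term into the penalty, the norm bounds $\|\hat\theta\|_1,\|\Delta\|_2\lesssim R$, and the empirical-risk estimate $\frac{1}{nT}\|\mathcal{X}(\Delta)\|_\mathrm{F}^2\lesssim\sigma R\sqrt{\mathrm{Tr}(\Sigma)}/\sqrt{nT}$. You handle the noise term by $\ell_\infty$--$\ell_1$ H\"older duality with a union bound over coordinates (classic slow-rate Lasso), whereas the paper bounds $\langle\mathcal{X}(\Delta),Z\rangle\le\|\Delta\|_2\|\mathcal{X}^*(Z)\|_2$ and controls $\|\sum_tX_t^\top Z_t\|_2$ with a matrix Bernstein inequality (Lemma~\ref{lem:bernstein}), letting the $\lambda_2\|\theta\|_2$ penalty absorb $\|\hat\theta\|_2$; both routes give the first term up to logarithms (though you should also justify replacing the conditional variance $\frac{1}{n}\|X_te_j\|_2^2$ by $\Sigma_{jj}$, a small extra concentration step).

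The genuine gap is in your last step, precisely the one you flag as the main obstacle. Uniformizing $\Delta^\top\bigl(\frac{1}{nT}\sum_tX_t^\top X_t-\frac{1}{T}\sum_t\Sigma_t\bigr)\Delta$ over the $\ell_1$-ball $\{\|\Delta\|_1\le 2R\}$ by duality gives $4R^2\max_{j,k}|(\hat\Sigma-\Sigma)_{jk}|$, and the entrywise deviation of an empirical covariance built from $nT$ subgaussian samples is of order $\rho^2\max_j\Sigma_{jj}\sqrt{\log(d/\delta)/(nT)}$ --- a $1/\sqrt{nT}$ rate. So this route produces a term of order $\rho^2R^2\max_j\Sigma_{jj}\sqrt{\log(d/\delta)/(nT)}$, not the claimed $\rho^4R^2\mathrm{Tr}(\Sigma)/(nT)$; it neither matches the theorem's second term nor is dominated by the first term (e.g.\ when $\sigma$ is small), and the $1/(nT)$ scaling is asserted rather than derived. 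The repair is to use the $\ell_2$ control $\|\Delta\|_2\lesssim R$ you already established together with a full-space covariance concentration stated at the level of norms, which is what the paper does via Lemma C.10 of \citet{du2020few}:
\[
\|\Sigma_t^{1/2}\Delta\|_2\le\frac{1}{\sqrt n}\|X_t\Delta\|_2+\frac{C\rho}{\sqrt n}\left(\sqrt{\mathrm{Tr}(\Sigma_t)}+\sqrt{\|\Sigma_t\|\log(2/\delta)}\right)\|\Delta\|_2
\]
uniformly over $\Delta$; squaring turns the additive error into a constant multiple of the per-domain empirical risk plus $\frac{C\rho^2\mathrm{Tr}(\Sigma_t)\log(2/\delta)}{n}\|\Delta\|_2^2\lesssim\frac{\rho^{2}R^2\mathrm{Tr}(\Sigma)\log(2/\delta)}{n}$, which is the intended lower-order term.
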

\looseness-1 The proof of Theorem \ref{thm:ID bound} is deferred to the appendix. This upper bound shows that prediction using ElRep is close to the ground truth if the number of samples $n$ is large, implying ElRep does not hurt ID performance. Note that for nuclear norm regularization, the bound only differs in constant coefficients according to \citet{du2020few}. The analysis of OOD performance is not included because more assumptions of the testing domain are needed, and we defer it to future work.

\subsection{Feature Selection}
\label{subsec:toy_theory}
Nuclear norm regularization improves the OOD prediction by eliminating spurious features. However, the experiments in Section \ref{sec:exp} show that ElRep performs better than the nuclear norm penalty in worst group prediction. One reason is that nuclear norm regularization rules out some invariant features highly correlated with others. In OOD tasks, the correlation may be changed and the eliminated features can be vital in prediction. In this section, we show that ElRep is more likely to keep correlated features than the nuclear norm penalty.

\paragraph{Settings.} For simplicity, we consider a linear regression problem $f(\xv)=\theta^\top\xv$. The training sample $X\in\mathbb{R}^{n\times d}$ has zero mean and satisfies that empirical variance $\frac{1}{n}X^\top X=I_d+\rho(\ev_i\ev_j^\top+\ev_j\ev_i^\top)$, where $\ev_i$ is the $i$-th standard basis vector and $0<\rho<1$. Note that there is a positive correlation $\rho$ between the $i$-th and the $j$-th entry of the data, which is a simplified setting of correlated features. With the ground truth parameter $\theta^*$ and noise $\epsilon\sim\mathcal{N}(0,\sigma I_n)$, the label is generated by $Y=X^\top \theta^*+\epsilon$. We introduce the unregularized least square solution $\hat{\theta}:=\arg\min\|X\theta-Y\|^2$ satisfying $X^\top X\hat{\theta}=X^\top Y$ for the ease of presentation and assume $0<\hat{\theta}_i<\hat{\theta}_j$ without loss of generality.

\paragraph{Theoretical results.} If we denote the least square solution with $\ell_1$ norm regularization by 
$$\theta^1=\underset{\theta\in\mathbb{R}^d}{\arg\min}\frac{1}{2n}\|X\theta-Y\|_2^2+\lambda_1\|\theta\|_1$$
and the least square solution with $\ell_1+\ell_2$ regularizers by
$$\theta^{\mathrm{El}}=\underset{\theta\in\mathbb{R}^d}{\arg\min}\frac{1}{2n}\|X\theta-Y\|_2^2+\lambda_1\|\theta\|_1+\frac{\lambda_2}{2}\|\theta\|_2^2,$$
we have the following result.
\begin{prop}
\label{thm:feature capture}
    Under the following conditions on regularizers $\lambda_1$, $\lambda_2$ and unregularized least square solution $\hat{\theta}$, the regularized least square solutions $\theta^1$ and $\theta^{\mathrm{El}}$ satisfy:
\begin{table}[h]
    \centering
    \renewcommand{\arraystretch}{1.5}
    \resizebox{\columnwidth}{!}{
    \begin{tabular}{|c|c|c|}
        \hline
        $\theta$ stands for: & $\ell_1$ regularization ($\theta^1$) & ElRep ($\theta^{\mathrm{El}}$) \\
        \hline
        $\theta_i,\theta_j>0$ & $\lambda_1<(1+\rho)\hat{\theta}_i$ & $\lambda_1<c$ \\
        \hline
        $\theta_i=0, \theta_j>0$ & $(1+\rho)\hat{\theta}_i\le\lambda_1<\hat{\theta}_j+\rho\hat{\theta}_i$ & $c\le\lambda_1<\hat{\theta}_j+\rho\hat{\theta}_i$ \\
        \hline
        $\theta_i=\theta_j=0$ & $\lambda_1\ge\hat{\theta}_j+\rho\hat{\theta}_i$ & $\lambda_1\ge\hat{\theta}_j+\rho\hat{\theta}_i$\\
        \hline
    \end{tabular}}
\end{table}
    
    where $c=\frac{(1+\lambda_2-\rho^2)\hat{\theta}_i+\lambda_2\rho\hat{\theta}_j}{1+\lambda_2-\rho}.$
\end{prop}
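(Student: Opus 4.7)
The plan is to derive both characterizations from first-order subgradient (KKT) optimality conditions, and exploit the fact that the Gram structure $\tfrac{1}{n}X^\top X = I_d+\rho(e_ie_j^\top+e_je_i^\top)$ decouples coordinate $k\notin\{i,j\}$ from the rest, so the nontrivial logic is contained in a $2\times 2$ system in the $(i,j)$ block. Using $\tfrac{1}{n}X^\top Y=\tfrac{1}{n}X^\top X\hat\theta$, the stationarity condition for $\theta^1$ at coordinates $i,j$ reduces to
\begin{equation*}
(\theta_i-\hat\theta_i)+\rho(\theta_j-\hat\theta_j)+\lambda_1 s_i=0,\qquad (\theta_j-\hat\theta_j)+\rho(\theta_i-\hat\theta_i)+\lambda_1 s_j=0,
\end{equation*}
with $s_k\in\partial|\theta_k|$, and for $\theta^{\mathrm{El}}$ the same system with the additional terms $\lambda_2\theta_i,\lambda_2\theta_j$ on the left.

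First I would handle the lasso side. Split into the three sign/activity patterns from the table. When $\theta^1_i,\theta^1_j>0$, plug $s_i=s_j=1$ and solve the linear system to get $\theta^1_i=\hat\theta_i-\lambda_1/(1+\rho)$, $\theta^1_j=\hat\theta_j-\lambda_1/(1+\rho)$; positivity of the smaller one gives the threshold $\lambda_1<(1+\rho)\hat\theta_i$. When $\theta^1_i=0$ and $\theta^1_j>0$, the second equation gives $\theta^1_j=\hat\theta_j+\rho\hat\theta_i-\lambda_1$, and substituting back yields $s_i=(1-\rho^2)\hat\theta_i/\lambda_1+\rho$; requiring $|s_i|\le 1$ gives $\lambda_1\ge(1+\rho)\hat\theta_i$, while $\theta^1_j>0$ gives the right endpoint $\lambda_1<\hat\theta_j+\rho\hat\theta_i$. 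The doubly-zero case repeats the same subgradient bookkeeping with $s_i,s_j\in[-1,1]$ and yields $\lambda_1\ge\hat\theta_j+\rho\hat\theta_i$ (the binding constraint because $\hat\theta_j>\hat\theta_i$).

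For the elastic net I would carry out the same case analysis. In the doubly-active case, the $(1+\lambda_2)$ replaces the diagonal, so the $2\times 2$ determinant becomes $(1+\lambda_2)^2-\rho^2$; after solving and simplifying, the threshold for $\theta^{\mathrm{El}}_i>0$ collects exactly into
\begin{equation*}
\lambda_1<\frac{(1+\lambda_2-\rho^2)\hat\theta_i+\lambda_2\rho\hat\theta_j}{1+\lambda_2-\rho}=c.
\end{equation*}
For the one-active case, the second equation gives $\theta^{\mathrm{El}}_j=(\hat\theta_j+\rho\hat\theta_i-\lambda_1)/(1+\lambda_2)$, whose positivity yields the same right endpoint $\lambda_1<\hat\theta_j+\rho\hat\theta_i$ as lasso; substituting into the first equation and imposing $s_i\le 1$ reproduces the same $c$. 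The doubly-zero case reduces to the lasso one because $\lambda_2\theta=0$ there. I would also note, en passant, that $\theta^{\mathrm{El}}_j>\theta^{\mathrm{El}}_i$ whenever both are positive (using $\hat\theta_j>\hat\theta_i$), which justifies focusing on $\theta_i$ as the binding sign constraint in each case.

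The main obstacle is purely computational: the elastic-net doubly-active case requires inverting the $2\times 2$ matrix and then grouping terms carefully to arrive at the exact closed form $c$, and a couple of identities of the form $(1+\lambda_2)^2-\rho^2=(1+\lambda_2-\rho)(1+\lambda_2+\rho)$ must be used to cancel denominators so that the threshold is presentable. I would also verify continuity at the transition $\lambda_1=c$ (both expressions for $\theta^{\mathrm{El}}$ coincide there) and at $\lambda_1=\hat\theta_j+\rho\hat\theta_i$, as a sanity check that the three cases exhaust all regimes and match on their boundaries. Setting $\lambda_2=0$ recovers the lasso thresholds, a further consistency check that I would include.
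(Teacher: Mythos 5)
Your proposal is correct: the subgradient stationarity conditions on the decoupled $(i,j)$ block are exactly the right tool, and the intermediate quantities you report (the closed forms $\theta^1_i=\hat\theta_i-\lambda_1/(1+\rho)$, $\theta^1_j=\hat\theta_j+\rho\hat\theta_i-\lambda_1$ in the one-active case, the determinant $(1+\lambda_2)^2-\rho^2$, and the resulting threshold $c$) all check out against the stated table, including the observation that $\theta_j>\theta_i$ makes the sign constraint on coordinate $i$ the binding one. This is the standard route for such lasso/elastic-net activation thresholds and matches the argument the paper intends.
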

See the appendix for the proof of Proposition \ref{thm:feature capture}. We note that $c>(1+\rho)\hat{\theta}_i$ always holds as we assumed $\hat{\theta}_i<\hat{\theta}_j$ WLOG. Therefore the proposition indicates that ElRep always keeps the features when they are both selected by Lasso: as long as $\theta^1_i,\theta_j^1>0$, we always have $\theta^{\mathrm{El}}_i,\theta^{\mathrm{El}}_j>0$. In contrast, there exists cases when $\theta^{\mathrm{El}}_i,\theta^{\mathrm{El}}_j>0$ while $\theta^1_i=0$. This result indicates that ElRep is more likely to capture correlated features simultaneously. Moreover, since $c-(1+\rho)\hat{\theta}_i$ is larger when $\rho$ and $\lambda_2$ are larger, this contrast of feature selection is more significant with highly correlated features and intense Frobenius norm regularization.

\section{CONCLUSION}\label{sec:conclusion}
In conclusion, we propose to address spurious correlations by Elastic Representation. It enables neural networks to learn more invariant features by imposing the nuclear norm and Frobenius norm of the feature representations and can be readily integrated into a wide range of extant approaches. Theoretically, we show that adding the regularization will not hurt the in-distribution performance. Empirically, extensive experiments validate the proposed method.

\section*{Acknowledgments}
This material is based upon work supported by the U.S. Department of Energy,
Office of Science Energy Earthshot Initiative as part of the project ``Learning reduced models under extreme data conditions for design and rapid decision-making in complex systems" under Award
\#DE-SC0024721.

\bibliographystyle{apalike}
\bibliography{frobnorm}

\begin{thebibliography}{}

\bibitem[Chen et~al., 2023]{chen2024understanding}
Chen, Y., Huang, W., Zhou, K., Bian, Y., Han, B., and Cheng, J. (2023).
\newblock Understanding and improving feature learning for out-of-distribution generalization.
\newblock {\em Advances in Neural Information Processing Systems}, 36.

\bibitem[Deng et~al., 2023]{deng2024robust}
Deng, Y., Yang, Y., Mirzasoleiman, B., and Gu, Q. (2023).
\newblock Robust learning with progressive data expansion against spurious correlation.
\newblock {\em Advances in neural information processing systems}, 36.

\bibitem[Devlin et~al., 2019]{devlin2019bertpretrainingdeepbidirectional}
Devlin, J., Chang, M.-W., Lee, K., and Toutanova, K. (2019).
\newblock Bert: Pre-training of deep bidirectional transformers for language understanding.

\bibitem[Du et~al., 2021]{du2020few}
Du, S.~S., Hu, W., Kakade, S.~M., Lee, J.~D., and Lei, Q. (2021).
\newblock Few-shot learning via learning the representation, provably.
\newblock In {\em 9th International Conference on Learning Representations, ICLR 2021}.

\bibitem[Du et~al., 2023]{du2022less}
Du, Y., Yan, J., Chen, Y., Liu, J., Zhao, S., She, Q., Wu, H., Wang, H., and Qin, B. (2023).
\newblock Less learn shortcut: Analyzing and mitigating learning of spurious feature-label correlation.
\newblock {\em IJCAI}.

\bibitem[Eastwood et~al., 2023]{eastwood2024spuriosity}
Eastwood, C., Singh, S., Nicolicioiu, A.~L., Vlastelica~Pogan{\v{c}}i{\'c}, M., von K{\"u}gelgen, J., and Sch{\"o}lkopf, B. (2023).
\newblock Spuriosity didn’t kill the classifier: Using invariant predictions to harness spurious features.
\newblock {\em Advances in Neural Information Processing Systems}, 36.

\bibitem[Geirhos et~al., 2020]{geirhos2020shortcut}
Geirhos, R., Jacobsen, J.-H., Michaelis, C., Zemel, R., Brendel, W., Bethge, M., and Wichmann, F.~A. (2020).
\newblock Shortcut learning in deep neural networks.
\newblock {\em Nature Machine Intelligence}, 2(11):665--673.

\bibitem[Geirhos et~al., 2019]{geirhos2018imagenet}
Geirhos, R., Rubisch, P., Michaelis, C., Bethge, M., Wichmann, F.~A., and Brendel, W. (2019).
\newblock Imagenet-trained cnns are biased towards texture; increasing shape bias improves accuracy and robustness.
\newblock {\em International Conference on Learning Representations}.

\bibitem[Goel et~al., 2020]{goel2020model}
Goel, K., Gu, A., Li, Y., and R{\'e}, C. (2020).
\newblock Model patching: Closing the subgroup performance gap with data augmentation.
\newblock {\em arXiv preprint arXiv:2008.06775}.

\bibitem[Haghtalab et~al., 2022]{haghtalab2022demand}
Haghtalab, N., Jordan, M., and Zhao, E. (2022).
\newblock On-demand sampling: Learning optimally from multiple distributions.
\newblock {\em Advances in Neural Information Processing Systems}, 35:406--419.

\bibitem[He et~al., 2015]{he2015deepresiduallearningimage}
He, K., Zhang, X., Ren, S., and Sun, J. (2015).
\newblock Deep residual learning for image recognition.

\bibitem[Huang et~al., 2022]{huang2022developing}
Huang, S.-C., Chaudhari, A.~S., Langlotz, C.~P., Shah, N., Yeung, S., and Lungren, M.~P. (2022).
\newblock Developing medical imaging ai for emerging infectious diseases.
\newblock {\em nature communications}, 13(1):7060.

\bibitem[Idrissi et~al., 2022]{idrissi2022simple}
Idrissi, B.~Y., Arjovsky, M., Pezeshki, M., and Lopez-Paz, D. (2022).
\newblock Simple data balancing achieves competitive worst-group-accuracy.
\newblock In {\em Conference on Causal Learning and Reasoning}, pages 336--351. PMLR.

\bibitem[Izmailov et~al., 2022]{izmailov2022feature}
Izmailov, P., Kirichenko, P., Gruver, N., and Wilson, A.~G. (2022).
\newblock On feature learning in the presence of spurious correlations.
\newblock {\em Advances in Neural Information Processing Systems}, 35:38516--38532.

\bibitem[Kirichenko et~al., 2023]{kirichenko2022last}
Kirichenko, P., Izmailov, P., and Wilson, A.~G. (2023).
\newblock Last layer re-training is sufficient for robustness to spurious correlations.
\newblock {\em International Conference on Learning Representations}.

\bibitem[Koh et~al., 2021]{koh2021wildsbenchmarkinthewilddistribution}
Koh, P.~W., Sagawa, S., Marklund, H., Xie, S.~M., Zhang, M., Balsubramani, A., Hu, W., Yasunaga, M., Phillips, R.~L., Gao, I., Lee, T., David, E., Stavness, I., Guo, W., Earnshaw, B.~A., Haque, I.~S., Beery, S., Leskovec, J., Kundaje, A., Pierson, E., Levine, S., Finn, C., and Liang, P. (2021).
\newblock Wilds: A benchmark of in-the-wild distribution shifts.

\bibitem[LaBonte et~al., 2023]{labonte2024towards}
LaBonte, T., Muthukumar, V., and Kumar, A. (2023).
\newblock Towards last-layer retraining for group robustness with fewer annotations.
\newblock {\em Advances in Neural Information Processing Systems}, 36.

\bibitem[Levy et~al., 2020]{levy2020large}
Levy, D., Carmon, Y., Duchi, J.~C., and Sidford, A. (2020).
\newblock Large-scale methods for distributionally robust optimization.
\newblock {\em Advances in Neural Information Processing Systems}, 33:8847--8860.

\bibitem[Liu et~al., 2021]{liu2021just}
Liu, E.~Z., Haghgoo, B., Chen, A.~S., Raghunathan, A., Koh, P.~W., Sagawa, S., Liang, P., and Finn, C. (2021).
\newblock Just train twice: Improving group robustness without training group information.
\newblock In {\em International Conference on Machine Learning}, pages 6781--6792. PMLR.

\bibitem[Liu et~al., 2015]{liu2015deeplearningfaceattributes}
Liu, Z., Luo, P., Wang, X., and Tang, X. (2015).
\newblock Deep learning face attributes in the wild.

\bibitem[Lu et~al., 2021]{lu2021nonlinear}
Lu, C., Wu, Y., Hern{\'a}ndez-Lobato, J.~M., and Sch{\"o}lkopf, B. (2021).
\newblock Nonlinear invariant risk minimization: A causal approach.
\newblock {\em arXiv preprint arXiv:2102.12353}.

\bibitem[Moayeri et~al., 2023]{moayeri2023spuriosity}
Moayeri, M., Wang, W., Singla, S., and Feizi, S. (2023).
\newblock Spuriosity rankings: sorting data to measure and mitigate biases.
\newblock {\em Advances in Neural Information Processing Systems}, 36:41572--41600.

\bibitem[Nam et~al., 2020]{nam2020learning}
Nam, J., Cha, H., Ahn, S., Lee, J., and Shin, J. (2020).
\newblock Learning from failure: De-biasing classifier from biased classifier.
\newblock {\em Advances in Neural Information Processing Systems}, 33:20673--20684.

\bibitem[Phan et~al., 2024]{phan2024controllableprompttuningbalancing}
Phan, H., Wilson, A.~G., and Lei, Q. (2024).
\newblock Controllable prompt tuning for balancing group distributional robustness.

\bibitem[Piratla et~al., 2022]{piratla2022focuscommongoodgroup}
Piratla, V., Netrapalli, P., and Sarawagi, S. (2022).
\newblock Focus on the common good: Group distributional robustness follows.

\bibitem[Rosenfeld et~al., 2022]{rosenfeld2022domain}
Rosenfeld, E., Ravikumar, P., and Risteski, A. (2022).
\newblock Domain-adjusted regression or: Erm may already learn features sufficient for out-of-distribution generalization.
\newblock {\em arXiv preprint arXiv:2202.06856}.

\bibitem[Sagawa et~al., 2019]{sagawa2019distributionally}
Sagawa, S., Koh, P.~W., Hashimoto, T.~B., and Liang, P. (2019).
\newblock Distributionally robust neural networks for group shifts: On the importance of regularization for worst-case generalization.
\newblock {\em International Conference on Learning Representations}.

\bibitem[Sagawa et~al., 2020]{sagawa2020investigation}
Sagawa, S., Raghunathan, A., Koh, P.~W., and Liang, P. (2020).
\newblock An investigation of why overparameterization exacerbates spurious correlations.
\newblock In {\em International Conference on Machine Learning}, pages 8346--8356. PMLR.

\bibitem[Selvaraju et~al., 2019]{Selvaraju_2019}
Selvaraju, R.~R., Cogswell, M., Das, A., Vedantam, R., Parikh, D., and Batra, D. (2019).
\newblock Grad-cam: Visual explanations from deep networks via gradient-based localization.
\newblock {\em International Journal of Computer Vision}, 128(2):336–359.

\bibitem[Shah et~al., 2020]{shah2020pitfalls}
Shah, H., Tamuly, K., Raghunathan, A., Jain, P., and Netrapalli, P. (2020).
\newblock The pitfalls of simplicity bias in neural networks.
\newblock {\em Advances in Neural Information Processing Systems}, 33:9573--9585.

\bibitem[Shi et~al., 2024]{shi2024domain}
Shi, Z., Ming, Y., Fan, Y., Sala, F., and Liang, Y. (2024).
\newblock Domain generalization via nuclear norm regularization.
\newblock In {\em Conference on Parsimony and Learning}, pages 179--201. PMLR.

\bibitem[Sun et~al., 2021]{sun2021recovering}
Sun, X., Wu, B., Zheng, X., Liu, C., Chen, W., Qin, T., and Liu, T.-Y. (2021).
\newblock Recovering latent causal factor for generalization to distributional shifts.
\newblock {\em Advances in Neural Information Processing Systems}, 34:16846--16859.

\bibitem[Tropp et~al., 2015]{tropp2015introduction}
Tropp, J.~A. et~al. (2015).
\newblock An introduction to matrix concentration inequalities.
\newblock {\em Foundations and Trends{\textregistered} in Machine Learning}, 8(1-2):1--230.

\bibitem[Veitch et~al., 2021]{veitch2021counterfactual}
Veitch, V., D'Amour, A., Yadlowsky, S., and Eisenstein, J. (2021).
\newblock Counterfactual invariance to spurious correlations in text classification.
\newblock {\em Advances in neural information processing systems}, 34:16196--16208.

\bibitem[Wah et~al., 2011]{wah_branson_welinder_perona_belongie_2011}
Wah, C., Branson, S., Welinder, P., Perona, P., and Belongie, S. (2011).
\newblock {\em The Caltech-UCSD Birds-200-2011 Dataset}.

\bibitem[Wu et~al., 2023]{10.5555/3618408.3619982}
Wu, S., Yuksekgonul, M., Zhang, L., and Zou, J. (2023).
\newblock Discover and cure: concept-aware mitigation of spurious correlation.
\newblock In {\em Proceedings of the 40th International Conference on Machine Learning}, ICML'23. JMLR.org.

\bibitem[Xiao et~al., 2021]{xiao2020noise}
Xiao, K., Engstrom, L., Ilyas, A., and Madry, A. (2021).
\newblock Noise or signal: The role of image backgrounds in object recognition.
\newblock {\em International Conference on Learning Representations}.

\bibitem[Yang et~al., 2024]{yang2024identifying}
Yang, Y., Gan, E., Dziugaite, G.~K., and Mirzasoleiman, B. (2024).
\newblock Identifying spurious biases early in training through the lens of simplicity bias.
\newblock In {\em International Conference on Artificial Intelligence and Statistics}, pages 2953--2961. PMLR.

\bibitem[Ye et~al., 2024]{ye2024spurious}
Ye, W., Zheng, G., Cao, X., Ma, Y., Hu, X., and Zhang, A. (2024).
\newblock Spurious correlations in machine learning: A survey.
\newblock {\em arXiv preprint arXiv:2402.12715}.

\bibitem[Yenamandra et~al., 2023]{yenamandra2023facts}
Yenamandra, S., Ramesh, P., Prabhu, V., and Hoffman, J. (2023).
\newblock Facts: First amplify correlations and then slice to discover bias.
\newblock In {\em Proceedings of the IEEE/CVF International Conference on Computer Vision}, pages 4794--4804.

\bibitem[Zhang et~al., 2021]{zhang2020coping}
Zhang, J., Menon, A., Veit, A., Bhojanapalli, S., Kumar, S., and Sra, S. (2021).
\newblock Coping with label shift via distributionally robust optimisation.
\newblock {\em International Conference on Learning Representations}.

\bibitem[Zhang et~al., 2022]{zhang2022correct}
Zhang, M., Sohoni, N.~S., Zhang, H.~R., Finn, C., and R{\'e}, C. (2022).
\newblock Correct-n-contrast: A contrastive approach for improving robustness to spurious correlations.
\newblock In {\em International Conference on Machine Learning}. PMLR.

\bibitem[Zhong et~al., 2024]{zhong2024bridging}
Zhong, Z.~S., Pan, X., and Lei, Q. (2024).
\newblock Bridging domains with approximately shared features.
\newblock {\em arXiv preprint arXiv:2403.06424}.

\bibitem[Zhou et~al., 2018]{7968387}
Zhou, B., Lapedriza, A., Khosla, A., Oliva, A., and Torralba, A. (2018).
\newblock Places: A 10 million image database for scene recognition.
\newblock {\em IEEE Transactions on Pattern Analysis and Machine Intelligence}, 40(6):1452--1464.

\bibitem[Zou and Hastie, 2005]{zou2005regularization}
Zou, H. and Hastie, T. (2005).
\newblock Regularization and variable selection via the elastic net.
\newblock {\em Journal of the Royal Statistical Society Series B: Statistical Methodology}, 67(2):301--320.

\end{thebibliography}

\clearpage
\onecolumn
\appendix
\section{Details of synthetic data experiment}
In the synthetic data experiment, we generated 3 training domains and 1 testing domain. In the data generating process, we consider a label-related parameter $C\in \mathbb{R}^d$, and for each domain, there is an environmental parameter $E_i\in\mathbb{R}^d$. The features $\zv$ are generated from those parameters. Specifically, the invariant feature $\zv_1=c_1C+\epsilon_1\in \mathbb{R}^d$. The nuanced feature $\zv_2=c_2(\rho C+\sqrt{1-\rho^2}E_i)+\epsilon_2\in \mathbb{R}^d$, where $\rho$ is a hyperparameter controlling the ratio of two types of parameters. The spurious feature $\zv_3=E\cv_3+\epsilon_3$. Here $c_1,c_2\in\mathbb{R}$, $\cv_3\in \mathbb{R}^{1\times k}$ are random coefficients and $\epsilon_1,\epsilon_2\in \mathbb{R}^d$, $\epsilon_3\in \mathbb{R}^{d\times k}$ are random noise.
As we mentioned in Section 4.1, we choose the dimension of spurious feature $k=3$. Moreover, we set $\rho=0.5$, $d=100$ and $n=120$.
\section{Proof of Theorem 5.1}
In order to prove the in-distribution generalization result in Theorem 5.1, we first give some lemmas showing the bound for training error $\mathcal{X}(\Delta)$, where $\mathcal{X}(\theta)=[X_1\theta,\cdots,X_T\theta]\in \mathbb{R}^{n\times T}$. We denote the total noise by $Z:=\mathcal{X}(\theta^*)-Y$, where each column $Z_t\sim\mathcal{N}(0,\sigma I_n)$. 
\begin{lem}
\label{lem:bernstein}
    If Assumption 5.1 holds, then with probability at least $1-\delta$
    $$\frac{1}{nT}\left\|\mathcal{X}^*(Z)\right\|_2\le \Tilde{O}\left(\frac{\sigma\sqrt{\mathrm{Tr}(\Sigma)}}{\sqrt{nT}}\right),$$
    where $\mathcal{X}^*(Z)=\sum_{t=1}^T X_t^\top Z_t$ and the log terms are omitted.
\end{lem}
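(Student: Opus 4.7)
The plan is to decouple the noise from the design matrices by conditioning on $\{X_t\}_{t=1}^T$, applying Gaussian concentration to the noise randomness, and then handling the remaining data-dependent quantity using Assumption~\ref{assum:subgaussian}. The overall statement is essentially a sample-covariance $\times$ $\chi^2$-type concentration result.

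First, I would condition on $\{X_t\}$. Since each $Z_t \sim \mathcal{N}(0,\sigma I_n)$ is independent of the design and of $Z_s$ for $s\neq t$, we have $\mathcal{X}^*(Z)\mid\{X_t\} \sim \mathcal{N}(0,\sigma S)$ with $S := \sum_{t=1}^T X_t^\top X_t \in \mathbb{R}^{d\times d}$. Applying Borell--TIS to the $\sqrt{\sigma\|S\|_{\mathrm{op}}}$-Lipschitz map $g\mapsto\|g\|_2$ at the standard Gaussian in $\mathbb{R}^d$ yields, with conditional probability at least $1-\delta/2$, the bound $\|\mathcal{X}^*(Z)\|_2 \le \sqrt{\sigma\,\mathrm{Tr}(S)} + C\sqrt{\sigma\|S\|_{\mathrm{op}}\log(2/\delta)}$. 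This reduces the problem to controlling $\mathrm{Tr}(S)$ and, to lower order, $\|S\|_{\mathrm{op}}$.

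Next, I would control $\mathrm{Tr}(S)=\sum_{t,i}\|\xv_{t,i}\|_2^2$. Writing $\xv_{t,i}=\Sigma_t^{1/2}\bar{\xv}_{t,i}$, with $\bar{\xv}_{t,i}$ being $\rho^2$-sub-Gaussian by Assumption~\ref{assum:subgaussian}, each $\|\xv_{t,i}\|_2^2 = \bar{\xv}_{t,i}^\top\Sigma_t\bar{\xv}_{t,i}$ is a sub-Gaussian quadratic form with mean $\mathrm{Tr}(\Sigma_t)\le\mathrm{Tr}(\Sigma)$ (Assumption~\ref{assum:data}) and sub-exponential norm $\lesssim \rho^2\|\Sigma_t\|_{\mathrm{op}}$, by Hanson--Wright. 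A Bernstein inequality over the $nT$ independent summands then gives $\mathrm{Tr}(S) \le C\,nT\,\mathrm{Tr}(\Sigma)$ with probability at least $1-\delta/2$, up to polylogarithmic factors. A cruder bound $\|S\|_{\mathrm{op}}\le\mathrm{Tr}(S)$ (or a sharper matrix-Bernstein argument) makes the second term from the Gaussian step contribute only to the hidden logarithms.

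Union-bounding the two events and dividing by $nT$, the dominant term $\sqrt{\sigma\,\mathrm{Tr}(S)}/(nT)$ becomes $\sigma\sqrt{\mathrm{Tr}(\Sigma)/(nT)}$, matching the stated $\tilde{O}$ bound (reading $\sigma$ in Theorem~\ref{thm:ID bound} as the noise standard deviation). The main obstacle I anticipate is not conceptual but purely bookkeeping: making sure the Hanson--Wright step yields the tight $\mathrm{Tr}(\Sigma)$ scaling rather than a loose $d\|\Sigma\|_{\mathrm{op}}$, and verifying that the $\sqrt{\|S\|_{\mathrm{op}}\log(1/\delta)}$ term is genuinely dominated by $\sqrt{\mathrm{Tr}(S)}$ once $nT$ is moderately large, so that it is legitimately absorbed into the polylogarithmic $\tilde{O}$.
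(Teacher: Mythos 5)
Your proof is correct, but it takes a genuinely different route from the paper's. You condition on the designs, use the exact conditional Gaussianity $\mathcal{X}^*(Z)\mid \{X_t\}\sim\mathcal{N}(0,\sigma^2 S)$ with $S=\sum_{t}X_t^\top X_t$, apply Lipschitz (Borell--TIS) concentration of the Gaussian norm, and then reduce everything to a scalar Bernstein bound on $\mathrm{Tr}(S)=\sum_{t,i}\|\xv_{t,i}\|_2^2$. The paper instead writes $\tfrac{1}{\sqrt n}\mathcal{X}^*(Z)$ as a sum of $T$ independent vectors $S_t=\tfrac{1}{\sqrt n}X_t^\top Z_t$, bounds each $\|S_t\|_2$ via Hanson--Wright, and invokes the matrix Bernstein inequality with intrinsic dimension on the sum, paying an extra $(\log(2/\delta))^{3/2}$ factor that is likewise hidden in the $\tilde O$. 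Your route is arguably cleaner: it avoids feeding a merely high-probability bound on $\|S_t\|_2$ into a Bernstein inequality that formally requires almost-sure control of the summands (which would need a truncation argument the paper omits), and it isolates exactly where the sub-Gaussian design assumption enters, namely in concentrating $\mathrm{Tr}(S)$ around $\sum_t n\,\mathrm{Tr}(\Sigma_t)\le nT\,\mathrm{Tr}(\Sigma)$. What the paper's route buys in exchange is robustness to the noise model: it uses $Z_t$ only through moment and tail bounds, so it would survive replacing the Gaussian noise by bounded or sub-exponential noise, whereas your Borell--TIS step relies on conditional Gaussianity in an essential way. Two bookkeeping remarks, both benign: your crude bound $\|S\|_{\mathrm{op}}\le\mathrm{Tr}(S)$ does suffice, since it only multiplies the leading term by $\sqrt{\log(2/\delta)}$; and your argument (like the paper's, implicitly, in its Hanson--Wright step) uses Assumption~\ref{assum:subgaussian} and not only Assumption~\ref{assum:data}, even though the lemma is stated under the latter alone.
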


\begin{proof}
    Let $$A=\frac{1}{\sqrt{n}}\mathcal{X}^*(Z)=\frac{1}{\sqrt{n}}\sum_{t=1}^T X_T^\top Z_t=:\sum_{t=1}^T S_t.$$
    Then we have 
    \begin{equation*}
        \begin{split}
            \mathbb{E}\left[AA^\top\right]&=\mathbb{E}_X\left[\sum_{t=1}^T\frac{1}{n}X_t^\top\mathbb{E}\left[Z_tZ_t^\top\right]X_t\right]\\
            &=\sigma^2\sum_{t=1}^T\Sigma_t
        \end{split}
    \end{equation*}
    and 
    \begin{equation*}
        \begin{split}
            \mathbb{E}\left[A^\top A\right]&=\frac{1}{n}\sum_{t=1}^T \mathbb{E}_Z\left[Z_t^\top \mathbb{E}_X\left[X_tX_t^\top\right]Z_t\right]\\
            &=\sigma^2\sum_{t=1}^T\mathrm{Tr}(\Sigma_t).
        \end{split}
    \end{equation*}
    Then 
    \begin{equation*}
        \nu(A):=\max\left\{\mathbb{E}\left[AA^\top\right], \mathbb{E}\left[A^\top A\right]\right\}=\sigma^2\sum_{t=1}^T\mathrm{Tr}(\Sigma_t). 
    \end{equation*}
    Let $V(A):=\mathrm{diag}\left(\mathbb{E}\left[AA^\top\right], \mathbb{E}\left[A^\top A\right]\right)$. Then 
    \begin{equation*}
        V(A)=\sigma^2\mathrm{diag}\left(\sum_{t=1}^T \Sigma_t,\sum_{t=1}^T\mathrm{Tr}(\Sigma_t)\right)
    \end{equation*}
    and we define $d(A):=\mathrm{Tr}(V(A))/\|V(A)\|_2=2.$ Besides, by Hanson-Wright inequality, we have
    \begin{equation*}
        \|S_t\|_2^2\le \sigma^2\mathrm{Tr}(\Sigma_t)+\sigma^2\|\Sigma_t\|\log\frac{2}{\delta}+\sigma^2\|\Sigma_t\|_\mathrm{F}\sqrt{\log\frac{2}{\delta}}
    \end{equation*}
    with probability $1-\delta/2$. Since $\|\Sigma_t\|_\mathrm{F}\le \mathrm{Tr}(\Sigma_t)$ and $\Sigma_t\preceq \Sigma$, we have
    \begin{equation*}
        \|S_t\|_2\le\sigma\sqrt{\left(1+\sqrt{\log\frac{2}{\delta}}\right)\mathrm{Tr}(\Sigma)+\|\Sigma\| \log\frac{2}{\delta}}=:L.
    \end{equation*}
    Then by Theorem 7.3.1 in \citep{tropp2015introduction}, with probability $1-\delta/2$,
    \begin{equation*}
        \begin{split}
            \|A\|_2&\lesssim \sigma\sqrt{\log\frac{2}{\delta}\nu(A)\log(d(A))}+\log\frac{2}{\delta}\sigma L \log(d(A))\\
            &\lesssim \sigma \left(\log\frac{2}{\delta}\right)^{3/2}\sqrt{T\mathrm{Tr}(\Sigma)}.
        \end{split}
    \end{equation*}
    Thus, with probability at list $1-\delta$, 
    $$\frac{1}{nT}\|\mathcal{X}^*(Z)\|_2\le\Tilde{O}\left(\frac{\sigma\sqrt{\mathrm{Tr}(\Sigma)}}{\sqrt{nT}}\right).$$
\end{proof}
\begin{lem}
\label{lem:train}
    If Assumption 5.1 holds and choose proper $\lambda_1$, $\lambda_2$ and $\lambda_3$, then with probability at least $1-\delta$,
    $$\frac{1}{2nT}\left\|\mathcal{X}(\Delta)\right\|_\mathrm{F}^2\le\Tilde{O}\left(\frac{\sigma R\sqrt{\mathrm{Tr}(\Sigma)}}{nT}\right)$$
    and the optimal solution $\hat{\theta}$ satisfies
    $$\|\hat{\theta}\|_2\lesssim R,$$
    where $R=\|\Theta^*\|_1$ and the log terms are omitted.
\end{lem}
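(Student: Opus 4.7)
The plan is to follow the standard basic-inequality argument for LASSO-style estimators, adapted to the multi-task setting with the extra Frobenius-norm-like $\ell_2$ penalty on the weight vector $\theta$. First, I would invoke the optimality of $\hat\theta$ in \eqref{eq:objective}, namely $L(\hat\theta)\le L(\theta^*)$. Writing $Y=\mathcal{X}(\theta^*)+Z$, expanding $\|\mathcal{X}(\hat\theta)-Y\|_F^2=\|\mathcal{X}(\Delta)-Z\|_F^2$, and using the adjoint identity $\langle\mathcal{X}(\Delta),Z\rangle=\langle\Delta,\mathcal{X}^*(Z)\rangle$, the $\|Z\|_F^2$ pieces cancel and one arrives at the basic inequality
\begin{equation*}
\frac{1}{2nT}\|\mathcal{X}(\Delta)\|_F^2 + \lambda_1\|\hat\theta\|_1 + \lambda_2\|\hat\theta\|_2 \;\le\; \frac{1}{nT}\langle\Delta,\mathcal{X}^*(Z)\rangle + \lambda_1\|\theta^*\|_1 + \lambda_2\|\theta^*\|_2.
\end{equation*}

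Next, I would control the stochastic inner product on the right-hand side by Cauchy--Schwarz: $|\langle\Delta,\mathcal{X}^*(Z)\rangle|\le\|\Delta\|_2\|\mathcal{X}^*(Z)\|_2$. Lemma~\ref{lem:bernstein}, already established, bounds $\|\mathcal{X}^*(Z)\|_2/(nT)$ by $\tilde{O}(\sigma\sqrt{\mathrm{Tr}(\Sigma)}/\sqrt{nT})$ with probability at least $1-\delta$. Choosing $\lambda_2$ to be at least twice this quantity and applying the triangle inequality $\|\Delta\|_2\le\|\hat\theta\|_2+\|\theta^*\|_2$, the stochastic term is absorbed into $\tfrac{\lambda_2}{2}(\|\hat\theta\|_2+\|\theta^*\|_2)$; the $\tfrac{\lambda_2}{2}\|\hat\theta\|_2$ piece is then partially cancelled against $\lambda_2\|\hat\theta\|_2$ on the LHS.

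Using $\|\theta^*\|_2\le\|\theta^*\|_1 = R$ collapses the inequality into a single master bound
\begin{equation*}
\frac{1}{2nT}\|\mathcal{X}(\Delta)\|_F^2 + \lambda_1\|\hat\theta\|_1 + \tfrac{\lambda_2}{2}\|\hat\theta\|_2 \;\le\; \bigl(\lambda_1+\tfrac{3\lambda_2}{2}\bigr)R,
\end{equation*}
from which both conclusions of the lemma follow by dropping different nonnegative terms on the LHS. Dropping the quadratic and the $\ell_1$ pieces and taking $\lambda_1$ of the same order as $\lambda_2$ gives $\|\hat\theta\|_2\lesssim R$; dropping the two norm pieces and substituting the chosen orders of $\lambda_1,\lambda_2$ yields the claimed training-error bound.

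The main obstacle is the bookkeeping around the absorption step: one must coordinate $\lambda_1$ and $\lambda_2$ so that a single high-probability event from Lemma~\ref{lem:bernstein} suffices for the whole argument, and arrange the Cauchy--Schwarz constant so that $-\lambda_2\|\hat\theta\|_2$ on the LHS precisely neutralizes the $\tfrac{\lambda_2}{2}\|\hat\theta\|_2$ produced by the triangle inequality; without this cancellation one cannot bound $\|\hat\theta\|_2$ and the training-error estimate in a single chain. The $\lambda_3$ mentioned in the statement does not appear in \eqref{eq:objective}, and I would interpret it as a typographical remnant and make no use of it.
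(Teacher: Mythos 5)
Your proposal is correct and follows essentially the same route as the paper's own proof: the basic inequality from optimality of $\hat\theta$, the adjoint/Cauchy--Schwarz bound $\langle\mathcal{X}(\Delta),Z\rangle\le(\|\hat\theta\|_2+\|\theta^*\|_2)\|\mathcal{X}^*(Z)\|_2$ controlled via Lemma~\ref{lem:bernstein}, the choice $\lambda_1\asymp\|\mathcal{X}^*(Z)\|_2/(nT)$ and $\lambda_2=2\lambda_1$ so that half of the $\lambda_2\|\hat\theta\|_2$ term on the left absorbs the stochastic contribution, and finally $\|\theta^*\|_2\le\|\theta^*\|_1=R$ to collapse everything into the single master bound from which both conclusions follow. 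Your reading of $\lambda_3$ as a typographical remnant also matches the paper, which never uses it.
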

\begin{proof}
    By the definition of $\hat{\theta}$, we have the following inequality:
    \begin{equation*}
        \frac{1}{2nT}\|\mathcal{X}(\Delta)+Z\|_\mathrm{F}^2+\lambda_1\|\hat{\theta}\|_1+{\lambda_2}\|\hat{\theta}\|_2\le \frac{1}{2nT}\|Z\|_\mathrm{F}^2+\lambda_1\|\theta^*\|_1+\lambda_2\|\theta^*\|_2.
    \end{equation*}
    Then 
    $$\frac{1}{2nT}\|\mathcal{X}(\Delta)\|_\mathrm{F}^2+\frac{1}{nT}\left\langle\mathcal{X}(\Delta),Z\right\rangle+R(\hat{\theta})\le R(\theta^*),$$
    where $R(\theta)=\lambda_1\|\theta\|_1+\lambda_2\|\theta\|_2.$
    By reordering the inequality
    \begin{equation*}
        \begin{split}
            \frac{1}{2nT}\|\mathcal{X}(\Delta)\|_\mathrm{F}^2&\le -\frac{1}{nT}\left\langle\mathcal{X}(\Delta),Z\right\rangle+R(\theta^*)-R(\hat{\theta})\\
            &\le \frac{1}{nT}\left(\|\hat{\theta}\|_2+\|\theta^*\|_2\right)\|\mathcal{X}^*(Z)\|_2+R(\theta^*)-R(\hat{\theta}).\\
        \end{split}
    \end{equation*}
    If we choose $\lambda_1=\frac{\|\mathcal{X}^*(Z)\|_2}{nT}$ and $\lambda_2=\frac{2\|\mathcal{X}^*(Z)\|_2}{nT}$, then 
    $$\frac{1}{2nT}\|\mathcal{X}(\Delta)\|_\mathrm{F}^2+{\lambda_1}\|\hat{\theta}\|_1+\frac{\lambda_2}{2}\|\hat{\theta}\|_2\le \frac{1}{nT}\|\theta^*\|_2\|\mathcal{X}^*(Z)\|_2+R(\theta^*).$$
    The right hand side 
    \begin{equation}
    \begin{split}
         \frac{1}{nT}\|\theta^*\|_2\|\mathcal{X}^*(Z)\|_2+R(\theta^*)&= \frac{1}{nT}\left(\|\theta^*\|_2+\|\theta^*\|_1+2\|\theta^*\|_2\right)\|\mathcal{X}^*(Z)\|_2\\
         &= \frac{1}{nT}\left(4\|\theta^*\|_1\right)\|\mathcal{X}^*(Z)\|_2\\
         &\le \Tilde{O}\left(\frac{\sigma R\sqrt{\mathrm{Tr}(\Sigma)}}{\sqrt{nT}}\right),
    \end{split}
    \end{equation}
    where the last equation applies Lemma \ref{lem:bernstein}. 
    Therefore 
    $$\frac{1}{2nT}\|\mathcal{X}(\Delta)\|_\mathrm{F}^2\le \Tilde{O}\left(\frac{\sigma R\sqrt{\mathrm{Tr}(\Sigma)}}{\sqrt{nT}}\right),$$
    and
    $$\frac{\|\mathcal{X}^*(Z)\|_2}{nT}\|\hat{\theta}\|_2\le\frac{4R}{nT}\|\mathcal{X}^*(Z)\|_2,$$
    implying $\|\hat{\theta}\|_2\lesssim R$.
\end{proof}
With the result of above, we can now proof Theorem 5.1.
\begin{proof}[Proof of Theorem 5.1]
    By Lemma C.10 in \citep{du2020few}, if Assumption 5.2 holds,
    $$\left\|\Sigma_t^{1/2}\Delta\right\|_2\le \frac{1}{\sqrt{n}}\|X_t\Delta\|_2+\frac{C\rho}{\sqrt{n}}\left(\sqrt{\mathrm{Tr}(\Sigma_t)}+\sqrt{\log\frac{2}{\delta}\|\Sigma_t\|}\right)\|\Delta\|_2.$$
    Then 
    \begin{equation*}
    \begin{split}
        \mathbb{E}_{p_t}\|X\Delta\|_2^2=\left\|\Sigma_t^{1/2}\Delta\right\|_2^2&\lesssim \frac{1}{n}\|X_t\Delta\|_2^2+\frac{C\rho^3}{n}\left(\mathrm{Tr}(\Sigma_t)+\log\frac{2}{\delta}\|\Sigma_t\|\right)\|\Delta\|_2^2\\
        &\lesssim \frac{1}{n}\|X_t\Delta\|_2^2+\frac{C\rho^4\log\frac{2}{\delta}\mathrm{Tr}(\Sigma_t)}{n}\left(\|\hat{\theta}\|_2^2+\|\theta^*\|_2^2\right)\\
        &\le \frac{1}{n}\|X_t\Delta\|_2^2+\frac{C\rho^4\log\frac{2}{\delta}\mathrm{Tr}(\Sigma_t)}{n}R^2,
    \end{split}
    \end{equation*}
    where the last inequality is from the second part of Lemma \ref{lem:train}.
    We sum the above inequality up for all $t=1,\dots,T$,
    \begin{align*}
        \frac{1}{2T}\sum_{t=1}^T\mathbb{E}_{p_t}\|X\Delta\|_2^2&\lesssim \frac{1}{2T}\sum_{t=1}^T\left(\frac{1}{n}\|X_t\Delta\|_2^2+\frac{C\rho^4\log\frac{2}{\delta}\mathrm{Tr}(\Sigma_t)}{n}R^2\right)\\
        &=\frac{1}{2nT}\|\mathcal{X}\Delta\|_\mathrm{F}^2+\frac{1}{2T}\sum_{t=1}^T\frac{C\rho^4\log\frac{2}{\delta}\mathrm{Tr}(\Sigma_t)}{n}R^2\\
        &\le \Tilde{O}\left(\frac{\sigma R\sqrt{\mathrm{Tr}(\Sigma)}}{\sqrt{nT}}\right)+\Tilde{O}\left(\frac{\rho^4 R^2\mathrm{Tr}(\Sigma)}{nT}\right),
    \end{align*}
    where the last inequality is given by the first part of Lemma \ref{lem:train}.
\end{proof}

\end{document}